\documentclass{article}

\PassOptionsToPackage{numbers,compress}{natbib}
\usepackage{natbib}
\usepackage[final]{neurips_2025}
\usepackage[ruled,vlined]{algorithm2e}
\usepackage{amsmath}
\usepackage{todonotes}
\usepackage{newtxtext}
\usepackage{wrapfig}
\usepackage{bold-extra}



\usepackage[utf8]{inputenc} 
\usepackage[T1]{fontenc}    
\usepackage{hyperref}       
\usepackage{url}            
\usepackage{booktabs}       
\usepackage{amsfonts}       
\usepackage{amsthm}
\usepackage{amsmath}
\usepackage{nicefrac}       
\usepackage{microtype}      
\usepackage{xcolor}         
\usepackage{dsfont}


\newtheorem{theorem}{Theorem}[section]

\newtheorem{proposition}[theorem]{Proposition}
\newcommand{\scout}{\textsc{scout}\xspace}
\newcommand{\test}{{\sc test}\xspace}
\newcommand{\testbrute}{{\sc test-bruteforce}\xspace}
\newcommand{\testbi}{{\sc test-bisection}\xspace}
\newcommand{\alphabeta}{{\sc alpha-beta}\xspace}
\newcommand{\solve}{\textsc{solve}\xspace}
\newcommand{\ab}{{\sc ab}\xspace}

\author{Raphaël Boige\thanks{Corresponding author: \texttt{\{name\}.\{surname\}@inria.fr}} \qquad Amine Boumaza \qquad Bruno Scherrer\\Université de Lorraine, CNRS, Inria, LORIA, F-54000 Nancy, France}

\usepackage{setspace}
\AtBeginDocument{%
  \addtolength\abovedisplayskip{-0.4\baselineskip}%
  \addtolength\belowdisplayskip{-0.4\baselineskip}%
\addtolength\abovedisplayshortskip{-0.6\baselineskip}%
 \addtolength\belowdisplayshortskip{-0.6\baselineskip}%
}

\title{AlphaBeta is not as good as you think: a simple class of synthetic games for a better analysis of deterministic game-solving algorithms}
\date{February 2025}

\begin{document}

\maketitle
\begin{abstract}
Deterministic game-solving algorithms are conventionally analyzed in the light of their average-case complexity against a distribution of random game-trees, where leaf values are independently sampled from a fixed distribution. This simplified model enables uncluttered mathematical analysis, revealing two key properties: root value distributions asymptotically collapse to a single fixed value for finite-valued trees, and all reasonable algorithms achieve global optimality. However, these findings are artifacts of the model’s design: its long criticized independence assumption strips games of structural complexity, producing trivial instances where no algorithm faces meaningful challenges. To address this limitation, we introduce a class of synthetic games generated by a probabilistic model that incrementally constructs game-trees using a fixed level-wise conditional distribution. By enforcing ancestor dependencies, a critical structural feature of real-world games, our framework generates problems with adjustable difficulty while retaining some form of analytical tractability. For several algorithms, including AlphaBeta and Scout, we derive recursive formulas characterizing their average-case complexities under this model. These allow us to rigorously compare algorithms on deep game-trees, where Monte-Carlo simulations are no longer feasible. While asymptotically, all algorithms seem to converge to identical branching factor (a result analogous to that of independence-based models), deep finite trees reveal stark differences: AlphaBeta incurs a significantly larger constant multiplicative factor compared to algorithms like Scout, leading to a substantial practical slowdown. Our framework sheds new light on classical game-solving algorithms, offering rigorous evidence and analytical tools to advance the understanding of these methods under a richer, more challenging, and yet tractable model.

\end{abstract}

\section{Introduction}  
\label{sec:introduction}
\setcounter{footnote}{0} 
In this work, we consider a class of deterministic two-player zero-sum games represented by trees of \emph{height} \( h \), where each node has a uniform \emph{branching degree} \( b \), as illustrated in Figure~\ref{fig:tree}. Each level alternates between decision points for the maximizing and minimizing players (with the root always being a max node). Internal nodes propagate values from their children via alternating $\min$/$\max$ operators, reflecting optimal play. Clearly, the entire tree is determined by its leaf values, and solving it involves recursively applying $\min$ and $\max$ operations until the root value is resolved.

Game-solving algorithms are conventionally evaluated~\cite{fuller1973analysis, baudet1978branching, pearl1982solution} by the number of leaf evaluations required. For example, brute-force search evaluates all \( b^h \) leaves, corresponding to a \textit{branching factor} of \( b \)---the average nodes evaluated per level, see Section~\ref{sec:preliminaries} for formal definitions. Notably, even with prior knowledge of the root value, verifying it requires evaluating at least one node per max level and all \( b \) nodes per min level. This results in a complexity of \( \smash{b^{\frac{h}{2}}} \), or a branching factor of \( \smash{\sqrt{b}} \), establishing
\begin{figure}[t!]
\centering
\includegraphics[width=\textwidth]{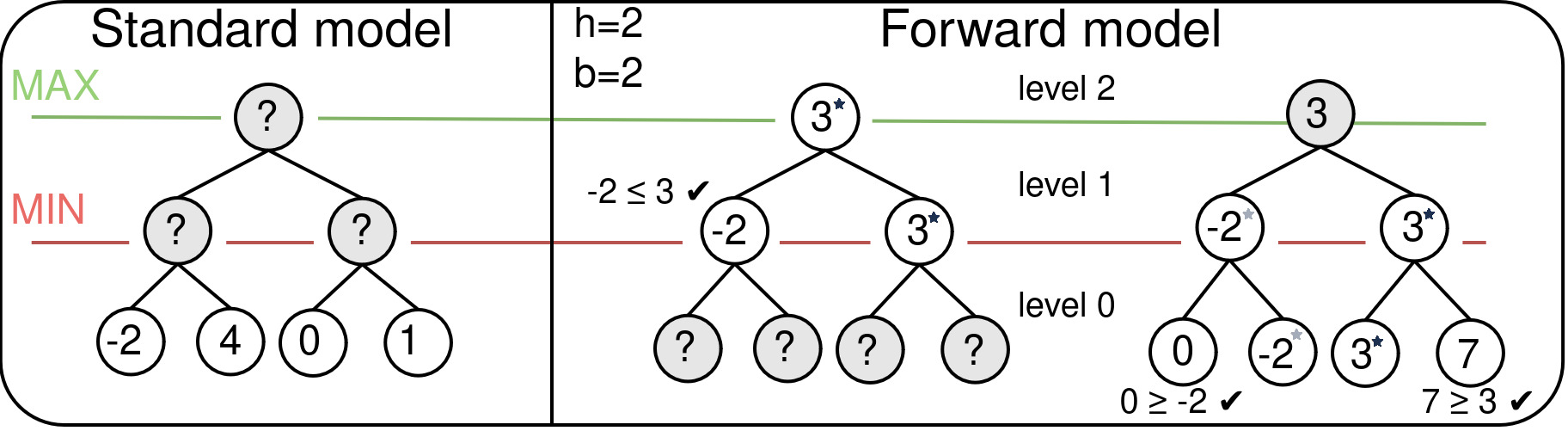}
\label{fig:tree}
\vspace{-0.6cm}
\caption{Illustration of a game-tree of height $h=2$ and branching degree $b=2$. (Left) In the \textit{standard model}, leaf values are independently sampled from a distribution. (Right) In the \textit{forward model}, intermediate values are sampled progressively, level by level, until leaf nodes are reached. One random child is chosen to inherit the root value, illustrated by a star symbol, and the remaining ones are sampled according to a fixed distribution, truncated to respect the minimax constraints.\vspace{-0.4cm}}
\end{figure}
global upper and lower bounds for all algorithms. \alphabeta, as shown in~\cite{knuth1975analysis}, achieves these bounds under optimally ordered worst-case and best-case trees. This has motivated the emergence of average-case analyses, which aim to understand how game-solving algorithms perform on a diverse collection of random trees. The classical approach, hereafter called the \textit{standard model}\footnote{The \textit{standard model} is often referred to as the Pearl-game or P-game model in the literature.}, samples leaf values independently from a fixed distribution. While mathematically tractable, this model exhibits critical flaws: when leaf values are restricted to a finite set, Pearl~\cite{pearl1980asymptotic} proved that the root value asymptotically collapses to a single fixed value for all distributions. This collapse renders algorithm comparisons questionable: instances become trivial, as methods merely confirm a predetermined value shared among trees.

A practical consequence is the \textit{standard model}'s assignment of global optimality to multiple algorithms. \alphabeta, for instance, provably achieves the \( \smash{\sqrt{b}} \) branching factor asymptotically when $h$ and $b$ tend to infinity, except for one rare case discussed later. This is remarkable, since no alternative algorithm, even in principle, can outperform them asymptotically. While this has been interpreted as a sign of algorithmic maturity, we believe instead that it reflects deficiencies in the evaluation framework. When trees homogenize to trivial instances with a single root value, comparisons lose meaning as algorithms face no substantive challenges to distinguish their performance. Previous works have pinpointed that these limitations stem from the model's independence assumption~\cite{nau1982investigation,nau1982lastplayer, knuth1975analysis}, which can't model dependencies between sibling nodes. Real-world games like chess possess an important type of ancestor dependencies: correlation among sibling nodes. In average a winning position for the white player should have many wins (and few losses) in the subtree emanating from it. Crucially, the independence assumption eliminates such complexity, rendering the \textit{standard model} a poor proxy for practical scenarios, and a poor benchmark for comparing algorithms.

In this work, we specify and analyze a synthetic game-tree model: the \textit{forward model}\footnote{In the rest of this work, we freely use the terms \textit{Forward-game} or \textit{F-game} to designate a game generated by the \textit{forward model}.}, that addresses these limitations. By constructing trees level-by-level with a conditional distribution that enforces ancestor dependencies, our approach captures two critical properties: (1) sibling values correlate conditioned on their parent value, coarsely mimicking the strategic continuity of real games, and (2) game difficulty can be better modulated, enabling more challenging benchmarks for algorithm analysis. In Section~\ref{sec:forward} we further detail our \textit{forward model}. Within this framework, we characterize the behavior of classic algorithms with recursive formulas for their average-case complexity, including \alphabeta, which we develop in Section~\ref{sec:alphabeta_complexit}. These formulas are convenient for asymptotic analysis as well as limit-depth evaluations, as they allow simulating the behavior of algorithms on deep trees much more efficiently than Monte-Carlo simulations. Our findings reveal that in the asymptotic regime, all algorithms share the same branching factor, as a function of the distribution chosen. Theoretically, this suggests that there is little reason to prefer one algorithm over another. However, finite-depth analysis, in Section~\ref{sec:exp}, reveals critical practical differences masked by asymptotics. Specifically, \alphabeta incurs a larger multiplicative constant, and needs in average to evaluated more leaf nodes compared to other algorithms like \scout. Finally, to allow reproduction of the numerical results presented in this paper, we open-source our codebase which permits the precise computation of average-case complexities of several algorithms and for game-trees up to height $h\approx 5000$\footnote{Link to the repository: \href{https://github.com/Egiob/alphabeta}{https://github.com/Egiob/alphabeta}}.

\section{Notations and background}
\label{sec:preliminaries}
In this section, we formalize the framework for analyzing deterministic game-solving algorithms, focusing on average-case complexity under a probabilistic tree generation model.
\paragraph{Game formalism} We model games as complete $b$-ary trees of \emph{height} $h$, where leaf nodes hold values from a space $\mathcal{V}$. Formally, a minimax node value $V_h$ with children values $\smash{(V^i_{h-1})_{i\leq b}}$ can be written $V_h = \max_{i\leq b}(V_{h-1}^i)$ (\textit{resp.} $\min_{i\leq b}(V_{h-1}^i)$) if level $h$ is at an even (\textit{resp.} odd) distance from the root.
However, in two-player zero-sum games, by using the identity $\min(a,b) = -\max(-a,-b)$, we can formulate an equivalent negamax view where the alternate min/max are replaced by a single operator. This defines a negamax node value $W_h$ with children values $\smash{(W^i_{h-1})_{i\leq b}}$ as:
\begin{equation}
    \label{eq:negamax}
    W_h = \max_{i\leq b}(-W_{h-1}^i)
\end{equation}
We can see the two views are equivalent,     as $W_h = V_h$ (\textit{resp.} $-V_h$) if level $h$ is at an even (\textit{resp.} odd) distance from the root. In the rest of this work, we use the negamax view, as it often simplifies algorithmic descriptions and formal analysis, conveniently halving the number of cases to study.

\paragraph{Algorithmic complexity and branching factor} To measure the efficiency of a deterministic algorithm $A$, the main value of interest is its \emph{average-case complexity} \( I_A(h) \), \textit{i.e.} the expected number---over a distribution of trees generated randomly---of \emph{leaf node} inspections required by the algorithm $A$ to terminate.

The \emph{asymptotic branching factor} \( r_A \) quantifies the complexity growth of the algorithm with the height of the tree:
\begin{equation}
    r_A = \lim_{h \to \infty} \sqrt[h]{I_A(h)}.
\end{equation}

This represents the effective branching rate per level. As stated before, global lower and upper bounds are known for these quantities, for all algorithms $A$: \(b^{\frac h 2} \leq I_ A(h) \leq b^h \text{ and } \sqrt{b} \leq r_A \leq b\).

\section{The forward model}
\label{sec:forward}
\begin{wrapfigure}[14]{r}{0.47\textwidth} 
\vspace{-0.5cm}
\begin{algorithm}[H]
\SetAlgoLined
\DontPrintSemicolon
\KwIn{Node value $x$, tree height~$h$, branching degree $b$.}
\KwOut{Children values list.}
\lIf{$h = 0$}{\Return{$[\ ]$} \tcp*[f]{Leaf nodes}} 
$x'_{list} \leftarrow[\ ]$ \\
$k_{sample} \sim \mathcal{U}\{1,b\}$\\
\For{k=1...b}{
    \lIf{$k= k_{sample}$}{$x' = -x$}
    \lElse{$x' \sim  \mu(\cdot \mid \cdot \geq -x)$}
    $x'_{list}.append(x')$ 
}
\Return{$x'_{list}$}
\caption{FORWARD-SAMPLE($x$,$h$,$b$)}
\label{alg:forward_sample}
\end{algorithm}
\end{wrapfigure}

This section formally introduces the \textit{forward model}: a simple synthetic game model that recursively generates game-trees, down from the root node, through a level-wise sampling process that enforces the negamax constraint from Equation~\ref{eq:negamax} at each step. This model resembles previous synthetic game models that generate trees in a top-down fashion~\cite{scheucher1998benefits,furtak2009minimum}. Still, to the best of our knowledge, this work provides the first analysis of this conceptually simple yet rich game model. The sampling process can be described as follows: starting from the root node, assuming it holds value $x$, one of its $b$ children is uniformly selected to inherit the parent’s negated value $-x$, ensuring compliance with the negamax constraint. The remaining $(b-1)$ children are then sampled from the level-wise distribution $\mu$, with support dynamically truncated  conditionally on $x$. For instance, if $\mu$'s initial support is $\{-n,...,n\}$ for an integer $n$, it gets truncated (and re-normalized) to $\{-x,..., n\}$. This procedure, dubbed \textsc{forward-sample} and formally described in Algorithm~\ref{alg:forward_sample}, is first called on the root node, and recursively applied to each node until all leaves have been generated. For simplicity, we choose the same distribution $\mu$ for each node, and we draw the root value of the tree according to $\mu$ as well.

Unlike traditional models that first assign leaf values and propagate them backward through min/max rules, our approach builds values progressively from the root downward, hence the "forward" designation. As demonstrated in the next section, this property simplifies the complexity analysis for game-solving algorithms, as intermediate outcomes are conditionally known during tree construction.

\section{A binary-valued example: the analysis of the SOLVE algorithm}

\label{sec:binarygames}
In this section, we consider binary-valued trees that represent two-outcome games, loss or win (i.e., $\mathcal{V}=\{0,1\}$). We focus on \solve, a canonical algorithm for two-outcome games, and contrast its behavior under the \textit{standard model} versus our \textit{forward model}. Key derivations appear in Appendix~\ref{app:solve}.

\paragraph{Algorithm description and standard model limitations}

The \solve algorithm (pseudo-code in Appendix~\ref{app:solve}) determines win/loss (1/0) outcomes by iteratively scanning through children and evaluating them until an opponent loss (0) is found, in this case it early stops and returns a win (1), if none is found, it returns a loss (0).  Under the \textit{standard model}, with probability $q_0$ of drawing a $0$ for a leaf node, \solve almost always achieves a globally optimal branching factor:
\begin{equation}
\label{eq:solve_std}
    r_{\solve}^{\textsc{standard}} =
    \begin{cases} \sqrt{b}&  \text{if $q_0\neq 1-\xi_b$ \ (very easy instances)}\\
    \frac {\xi_b}{(1-\xi_b)}=\mathcal{O}(\frac b {\log{b}}) & \text{if $q_0=1-\xi_b$ \ (special case)}
    \end{cases},
\end{equation}
where $\xi_b$ is the positive root of $x^b+x-1=0$~\cite{pearl1980asymptotic}. Since only this exceptional regime generates non-trivial instances, it has been thoroughly analyzed in the literature \cite{baudet1978branching, pearl1980asymptotic}. However, even this hardest regime generates much easier instances than those of an ordering-invariant worst-case model~\cite{saks1986probabilistic}:
\begin{equation}
\label{eq:saks}
   r_{\textsc{solve}}^{\textsc{rand-wc}} = \frac{b - 1 + \sqrt{b^2 + 14b + 1}}{4} = \mathcal{O}(b).
\end{equation}
The \textit{standard model}’s abrupt transition between $\smash{\sqrt{b}}$ and $\log b /b$ regimes, described in Equation~\ref{eq:solve_std}, reveals its inability to generate smoothly tunable or maximally hard instances.

\paragraph{Complexity analysis}
Under the \textit{forward model}, where $\mu = \mathcal{B}(q)$ (Bernoulli distribution with $q$ the probability of drawing a $0$), we derive (Appendix~\ref{app:solve}) a closed-form expression for \solve's branching factor:  
\begin{equation} 
\label{eq:solve_r_t}
    r_{\text{\solve}} = \frac{ t(q,b) + \sqrt{t(q,b)^2 + 4b} }{2}, \quad t(q,b)= \sum_{k=1}^{b-1} \tfrac{1 + (b - k - 1)q}{b} k (1 - q)^k.
\end{equation}  

If $q=1$, the model collapses to a trivial tree (alternating levels full of zeros and full of ones) leading to a $\smash{\sqrt{b}}$ branching factor. However, if $q=0$, it matches exactly the worst-case complexity of Equation~\ref{eq:saks}. For values of $q$ in-between, the monotonicity of $r_{\text{\solve}}$ with respect to $q$ guarantees that any branching factor from easiest to hardest case can be reached, allowing adjustable difficulty calibration (unattainable under the \textit{standard model}). In particular, a continuous range of $q$ values leads to asymptotically hardest instances as stated in the following theorem, the proof of which is in Appendix~\ref{app:solve_proof}.

\newcommand{\thmtexta}{For $b\in\mathbb N$ and for all $q \in [0,\frac 1 b]$, the branching factor of \solve satisfies $r_{\text{\solve}} = \mathcal{O}(b)$.}

\newtheorem*{Ta}{Theorem~\ref{th:solve1b}}
 
\begin{theorem}
\label{th:solve1b}
\thmtexta
\end{theorem}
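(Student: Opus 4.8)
The plan is to reduce everything to a two-sided estimate on the scalar $t(q,b)$ and then read off the branching factor from the closed form in Equation~\ref{eq:solve_r_t}. The map $t\mapsto \tfrac12\bigl(t+\sqrt{t^2+4b}\bigr)$ is increasing on $[0,\infty)$ and sandwiched as $t\le \tfrac12\bigl(t+\sqrt{t^2+4b}\bigr)\le t+\sqrt b$ (the right inequality from $\sqrt{t^2+4b}\le t+2\sqrt b$, the left from $\sqrt{t^2+4b}\ge t$). Hence $t(q,b)\le r_{\solve}\le t(q,b)+\sqrt b$, so it suffices to prove $t(q,b)=\Theta(b)$ \emph{uniformly} over $q\in[0,\tfrac1b]$; the stated $r_{\solve}=\mathcal O(b)$ — and in fact $r_{\solve}=\Theta(b)$, so that the whole interval $[0,\tfrac1b]$ realizes asymptotically hardest instances, matching the random-worst-case value of Equation~\ref{eq:saks} — then follows at once. (The upper bound $r_{\solve}\le b$ is anyway automatic from the universal bound $r_A\le b$; the substantive content is the matching lower bound.)

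For the upper estimate on $t(q,b)=\sum_{k=1}^{b-1}\tfrac{1+(b-k-1)q}{b}\,k\,(1-q)^k$, I would use $q\le\tfrac1b$ to bound $(b-k-1)q\le\tfrac{b-1}{b}<1$, hence each coefficient $\tfrac{1+(b-k-1)q}{b}\le\tfrac2b$, together with $(1-q)^k\le1$; summing $\sum_{k=1}^{b-1}k=\tfrac{b(b-1)}{2}$ gives $t(q,b)\le b-1$, so $r_{\solve}\le b-1+\sqrt b=\mathcal O(b)$. (Alternatively the upper bound follows from the stated monotonicity of $r_{\solve}$ in $q$ and its value $\Theta(b)$ at $q=0$.) For the lower estimate I would discard the nonnegative summand $(b-k-1)q$ and bound $(1-q)^k\ge(1-\tfrac1b)^{b-1}$ for every $k\le b-1$; the one nonroutine ingredient is a constant, $b$-independent lower bound for $(1-\tfrac1b)^{b-1}$, which is the classical fact that $b\mapsto(1-\tfrac1b)^b$ increases to $1/e$, giving e.g.\ $(1-\tfrac1b)^{b-1}\ge(1-\tfrac1b)^b\ge\tfrac14$ for all $b\ge2$ (the small cases $b\in\{1,2\}$ — where $b=1$ yields an empty sum and $r_{\solve}=1=b$ — being checked by hand). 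This yields $t(q,b)\ge\tfrac1{4b}\sum_{k=1}^{b-1}k=\tfrac{b-1}{8}$, whence $r_{\solve}\ge t(q,b)=\Omega(b)$.

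I do not expect a genuine obstacle here: the estimates are elementary, and the only point demanding care is that they hold \emph{uniformly} in $q$ on $[0,\tfrac1b]$ — which is precisely what makes the crude two-sided bounds preferable to the tempting but messy alternative of analysing $q\mapsto r_{\solve}(q)$ directly through the closed form (differentiating, locating its maximum on the interval). The mild subtlety in the lower bound is the uniform control of $(1-1/b)^{b-1}$ away from $0$; once that is in place, both directions are one-line arithmetic-series computations.
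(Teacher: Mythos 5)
Your proof is correct, and it takes a genuinely different route from the paper's. The paper first establishes that $t(q,b)$ is monotonically decreasing in $q$ on $[0,1]$ (by a sign analysis of $\partial t/\partial q$, which it does not detail), reduces the claim to the single endpoint $q=1/b$, and then computes the exact asymptotics $t(1/b,b)=e^{-1}b+o(b)$ by expanding the two geometric-type sums and using $(1-\tfrac1b)^b=e^{-1}+o(1)$. You instead bypass both the monotonicity lemma and the exact asymptotic evaluation by proving crude two-sided bounds, uniform in $q\in[0,\tfrac1b]$: the sandwich $t\le r_{\text{\solve}}\le t+\sqrt b$ from the closed form of Equation~\ref{eq:solve_r_t}, the upper bound $t\le b-1$ from the coefficient bound $\tfrac{1+(b-k-1)q}{b}\le\tfrac2b$, and the lower bound $t\ge\tfrac{b-1}{8}$ from $(1-q)^k\ge(1-\tfrac1b)^{b-1}\ge\tfrac14$. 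All of these steps check out (including the $b\in\{1,2\}$ edge cases). Your argument is more elementary and more self-contained, and it makes explicit the point the paper leaves implicit — that the literal statement $r_{\text{\solve}}=\mathcal O(b)$ is vacuous given the universal bound $r_A\le b$, so the substantive content is the matching lower bound $r_{\text{\solve}}=\Omega(b)$, which is exactly what "asymptotically hardest instances" means in the surrounding text. What the paper's route buys in exchange is the sharp constant: it identifies $t(1/b,b)\sim e^{-1}b$ (and $t\sim b/2$ for $q=b^{-a}$, $a>1$), locating the model relative to the worst-case constant of Equation~\ref{eq:saks}, whereas your constants ($\tfrac18$ below, $1$ above) are deliberately loose. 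Both proofs are valid; yours is preferable if one only wants the order of magnitude, the paper's if one wants the leading constant.
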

In the next section, we analyze classical algorithms on a more general type of trees under our original \textit{forward} tree model.

\vspace{-3mm}
\section{Average-case analysis of classic algorithms}  
\label{sec:alphabeta_complexit}  

    Even though binary-valued games offer a simplified analysis, they cannot reflect the diversity of real games, which are best modeled using a broader value range. In this section, we analyze the algorithms \test, \alphabeta and \scout, on trees with values in $\{-n,\ldots, n\}$, \textit{i.e.} $\mu$ is a categorical distribution $\mathit{Cat}(p_{-n},\ldots,p_n)$.

\subsection{Analysis of TEST}  
\label{subsec:test_complexity} 

\begin{wrapfigure}[10]{r}{0.48\textwidth} 
\vspace{-2.1cm}
\begin{algorithm}[H]
\SetAlgoLined
\DontPrintSemicolon
\KwIn{Current node \textit{N}, search depth $h$, threshold $s$}
\KwOut{Certificate value determining if \textit{N}'s value is greater or equal than $s$}

\lIf{$h = 0$}{\Return{N.value}}

$best \leftarrow -\infty$\;
\ForEach{N' in N.children}{
    $value \leftarrow -\text{TEST}(N',\, h-1,\, -s+1)$
    
    $best \leftarrow \max(best, value)$ 
    
    \lIf{$best \geq s$}{\textbf{break} }
}   
\Return{$best$}\;
\caption{TEST($N,h,s$), \textit{negamax} form}
\label{alg:negamax-test}

\end{algorithm}
\end{wrapfigure}

\paragraph{Algorithm description} Described in Algorithm~\ref{alg:negamax-test}, \test answers, given a threshold $s$, whether the root value $x$ satisfies $x$$\geq$$s$. Like \solve, \test iteratively evaluates every child node by calling a negated version of itself and terminates early whenever it finds a value validating the condition $x$$\geq$$s$. It is almost identical to applying \solve to a binarized version of the same tree where leaves $l_i$ are converted to 1 if $l_i$$\geq$$s$ and 0 otherwise---with the main difference being \test not only returns the binary result of the assertion but also returns a certificate value determining whether $x$$\geq$$s$ or not. This makes it a useful building block for other game-solving algorithms. For instance, a simple algorithm  could brute-force over all possible thresholds $s$ to identify $x$, note that this approach can be optimized via bisection. In Section~\ref{sec:exp}, we compare these \testbrute and \testbi approaches to the \alphabeta and \scout algorithms. In the \textit{standard model}, if leaf values are drawn according to a distribution $\nu_0$ with cumulative distribution $F_{\nu_0}$, then:
\begin{equation}
    r_{\textsc{test}}^{\textsc{standard}}(\nu_0, s) = r_{\textsc{solve}}^{\textsc{standard}}(q_0=F_{\nu_0}(s)).
\end{equation}
This equivalence suggests that games with discrete values are not fundamentally asymptotically harder than games with binary values. In the following we characterize \test under the \textit{forward model} and show that its branching factor coincides with that of \solve on a worst-case distribution.
\vspace{-3mm}\paragraph{Complexity analysis} For a given threshold value \(s \in \{-n+1,\ldots, n\}\) we are interested in the average-complexity of \textsc{test($s$)} defined as \(I_{\text{\test}}^{s}(h) = \mathbb{E}_{X\sim\mu}[I_{\text{\test}}^{X, s}(h)\)], where \(I_{\text{\test}}^{x, s}(h)\) denotes the expected complexity of \test when the root value is \(x \in \{-n,\ldots,  n\}\). To model intermediate evaluation states, we extend this definition to \(I_{\text{\test}}^{x, s}(h, c)\), representing the complexity when the current node (with value $x$) has \(c \leq b\) remaining children to evaluate (all other nodes still have \(b\) children). Thus, the base case satisfies:  $I_{\text{\test}}^{x, s}(h) = I_{\text{\test}}^{x, s}(h, c = b)$. A necessary tool for expressing $I_{\text{\test}}^{x, s}(h)$ is the auxiliary function $J_{\text{\test}}^{x, s}(h, c)$, which represents the same complexity value, but conditioned on the probabilistic event that the "special child" (inheriting the root's value, enforced by the negamax constraint) has already been identified. A recursive system characterizing the average-case dynamics of \test can be derived by analyzing its execution flow under the \textit{forward model}. Upon evaluating the first child ($c=b$): (1) with probability \(1/c\), \test encounters the "special child", inheriting the root value \(-x\). The algorithm must then fully evaluate this child node by a recursive call to a negated version of \test at height $h-1$, if no cutoff occurs (\textit{i.e.} \(x < s\)) it continues evaluating the root node (height $h$), but with only \(c - 1\) remaining children, and knowing the "special child" has been found (cost of $J_{\text{\test}}^{x, s}(h, c-1)$ instead of $I_{\text{\test}}^{x, s}(h, c-1)$); (2) with probability \((c - 1)/c\), \test encounters a "normal" child whose value \(X'\) is sampled from \(\mu\) truncated to \(\{-x,\ldots, n\}\) and in the absence of cutoff (\textit{i.e.} \(-X' < s\)), the algorithm proceeds to evaluate the remaining children of the root node (cost of $I_{\text{\test}}^{x, s}(h, c-1)$). This gives the following equation for $I_{\text{\test}}^{x, s}$: 
\begin{equation}
\label{eq:i_test_recurrence}
\begin{split}  
I_{\text{\test}}^{x, s}(h, c) = \frac{1}{c} \biggl[\overbrace{ I_{\text{\test}}^{-x, -s+1}(h-1, b) + \mathds{1}_{\{x < s\}} J_{\text{\test}}^{x, s}(h, c-1)}^{\text{Special child $X'=-x$}} \biggr] \\  
+ \frac{c - 1}{c} \underset{\substack{X' \sim \mu \\ X' \geq -x }}{\mathbb{E}} \biggl[\underbrace{ I_{\text{\test}}^{X', -s+1}(h-1, b)  + \mathds{1}_{\{-X' < s\}} I_{\text{\test}}^{x, s}(h, c-1) }_{\text{Normal child $X'\sim \mu$}}\biggr].
\end{split}  
\end{equation}

The auxiliary function \(J_{\text{\test}}^{x, s}(h, c)\) follows the same logic, but being conditioned on the event "the special child already has been found", it only allows one of the branches, so the equation simplifies to:
\begin{equation}  
\label{eq:j_test_recurrence}
J_{\text{\test}}^{x, s}(h, c) = \underset{\substack{X' \sim \mu \\ X' \geq -x }}{\mathbb{E}} \biggl[ I_{\text{\test}}^{X', -s+1}(h-1, b) + \mathds{1}_{\{-X' < s\}} J_{\text{\test}}^{x, s}(h, c-1) \biggr].
\end{equation}

Note that for the end case $h=0$, both $I$ and $J$ equal $1$ (a tree with only one node incurs a cost of $1$), and for $c=0$, both $I$ and $J$ equal 0 (because no remaining child incurs no additional costs). Equations~\ref{eq:i_test_recurrence}~and~\ref{eq:j_test_recurrence}, characterizing the complexity of \test ---as well as Equations~\ref{equation:ab_complexity}~and~\ref{equation:ab_complexity_j} (resp. Equations \ref{eq:i_scout}~and~\ref{eq:j_scout}) characterizing the complexity of \alphabeta (resp. \scout)---were numerically validated through an extensive comparison with Monte-Carlo simulations, see Appendix~\ref{app:montecarlo}.

Conveniently, the intrinsic linear nature of Equations~\ref{eq:i_test_recurrence}~and~\ref{eq:j_test_recurrence} makes it possible to write the system in matrix form. This facilitates the efficient numerical computation  of the complexity of \textsc{test($s$)} by matrix iteration and that of its branching factor as the spectral radius (eigenvalue with highest magnitude) of this matrix. Additionally, we define a global branching factor for the \test algorithm, corresponding to the complexity of the average \test, or equivalently, the complexity of the hardest \test (over all threshold values $s$): $r_{\text{\test}} = \max_{s} r_{\text{\test}}(s)$. Interestingly, against a worst-case distribution $\mu = \delta_n$ (all probability mass concentrated on $n$) $r_\text{\test}$ exactly attains the bound of Equation~\ref{eq:saks}. As a consequence, following the conclusions from the analysis of \solve in Section~\ref{sec:binarygames}, the \textit{F-games} are typically harder problems than the games generated under \textit{standard model}, even for discrete-valued trees. That makes $r_{\text{\test}}$ an interesting and easy-to-compute quantity to gauge the difficulty induced by the choice of a distribution $\mu$. We use this property in Section~\ref{sec:exp} as a measure of difficulty: if $r_{\text{\test}}$ is large (resp. small) the game-trees are considered hard (resp. easy). In the following section, we characterize the average-case complexity of \alphabeta and compare its branching factor to that of \test.

\subsection{Analysis of ALPHA-BETA} 
\begin{wrapfigure}[13]{r}{0.56\textwidth} 
\vspace{-1.25cm}
\begin{algorithm}[H]
\SetAlgoLined
\DontPrintSemicolon

\KwIn{Current node \textit{N}, search depth $h$, lower-bound $\alpha$, upper-bound $\beta$}
\KwOut{Value of node \textit{N}.}

\lIf{$h = 0$}{    \Return{N.value} }

$best \leftarrow -\infty$\;
\ForEach{N' in N.children}{
    $value \leftarrow  -\text{ALPHABETA}(N', h-1, -\beta, -\alpha)$ 

    $best \leftarrow \max(best,value)$ 
    
    \lIf{$best \geq \beta$}{        \textbf{break}     }

    $\alpha \leftarrow \max(\alpha, best)$\;

}
\Return{$best$}\;

\caption{ALPHABETA($N, h, \alpha, \beta$), \textit{negamax} form}\label{alg:alphabeta}
\end{algorithm}

\end{wrapfigure}
\paragraph{Algorithm description} The \alphabeta algorithm improves upon classical full negamax search by pruning branches that cannot influence the root value. It tracks two evolving bounds: $\alpha$ (the worst-case guarantee for the maximizing player) and $\beta$ (the best-case allowance for the minimizing player). As described in Algorithm~\ref{alg:alphabeta}, upon a child evaluation, achieved through a recursive call with negated parameters $\alpha'=-\beta$ and $\beta' = -\alpha$, \alphabeta updates the current best value and early terminates whenever it exceeds $\beta$, and it potentially updates $\alpha$ for next sibling evaluation. While typically invoked with a full-window ($\alpha=-\infty$, $\beta=+\infty$) to compute the root value $x$ exactly, \alphabeta can also operate with bounded intervals, in this case if $x$ is not comprised in $[\alpha, \beta]$, it will return a certificate value (like \test) asserting whether $x$$\geq$$\beta$ or $x$$\leq$$\alpha$. A well-known~\cite{plaat1994new} connection to \test appears with a so-called \textit{null-window} ($\alpha = s-1$, $\beta = s$), in this case, \alphabeta becomes functionally equivalent to \textsc{test($s$)}, incurring the same complexity and producing identical certificates. 
\paragraph{Complexity analysis} We define $I_{\text{\ab}}^{x,\alpha, \beta}(h, c)$ the average-case complexity of \alphabeta called with parameters $\alpha$$<$$\beta$ and using same notations as before. We are interested in  $\smash{I_{\text{\ab}}(h) = \mathbb{E}_{X\sim\mu}[I_{\textsc{ab}}^{X, -n,n}(h)}]$, the average complexity of \alphabeta called with a full-window, and in $r_\textsc{ab}$, its branching factor. The derivation of this complexity follows closely the one for \test and yields a similar system of recursive equations. For a node with value $X'$, where \alphabeta is called with parameters $\alpha$ and $\beta$, the main differences with \test are: (1) the cutoff condition now becomes $-X'$$<$$\beta$, (2) the recursive calls to \alphabeta are made with parameters $-\beta$ and $-\alpha$ and (3) unlike in \test, further calls at the same level may use an updated value of $\alpha$ if the current child has the best value encountered so far. Assuming $J_\textsc{ab}$ follows a similar definition to that of $J_\textsc{test}$:
\begin{align}
    \vspace{-2cm}
    \begin{aligned}
        I_{\text{\ab}}^{x,\alpha, \beta}(h, c) = 
        \frac{1}{c} \biggl[ I_{\text{\ab}}^{-x,-\beta,-\alpha}(h-1, b) + \mathds{1}_{\{x<\beta\}} J^{x,\max(\alpha,x), \beta}_{\text{\ab}}(h,c-1) &\biggr]&& \\
      + \frac{c-1}{c} \underset{\substack{X' \sim \mu \\ X'\geq-x }}{\mathbb{E}}
        \biggl[I_{\text{\ab}}^{X',-\beta,-\alpha}(h-1, b) + \mathds{1}_{\{-X'<\beta\}}I_{\text{\ab}}^{x,\max(\alpha,-X'), \beta}(h, c-1) &\biggr],
    \end{aligned}
    \label{equation:ab_complexity} \\
    \textup{and }J_{\text{\ab}}^{x,\alpha,\beta}(h, c) = 
    \underset{\substack{X' \sim \mu \\ X'\geq-x }}{\mathbb{E}}
    \biggl[I_{\text{\ab}}^{X',-\beta, -\alpha}(h-1, b) + \mathds{1}_{\{-X'<\beta\}}J_{\text{\ab}}^{x,\max(\alpha, -X'), \beta}(h, c-1) \biggr]
    \label{equation:ab_complexity_j}.
    \vspace{-1cm}
\end{align}
This system of equations resembles Equations~\ref{eq:i_test_recurrence}~and~\ref{eq:j_test_recurrence} and allows us to run comprehensive numerical simulations, which lead to the remarkable observation that \test and \alphabeta share the same branching factor. We find that the equality of the branching factor holds theoretically, as indicated in the following theorem (proof in Appendix~\ref{app:ab_proof}):

\newcommand{\thmtextd}{\alphabeta called with a full window $\{-n,...,n\}$ is more efficient than the approach consisting in testing every possible value with the \test procedure (i.e., \testbrute) and \alphabeta and \test share the same asymptotic branching factor, in the precise sense that for $h\geq 0$ and $x\in \{-n,\ldots,n\}$:
{\begin{align*}
    I_{\text{\ab}}^{x,-n, n}(h) \leq \sum_{s=-n+1}^{n} I_{\text{\test}}^{x,s}(h) \quad \text{and} \quad r_{\text{\ab}} = r_{\text{\test}}.
\end{align*}}}
\newtheorem*{Td}{Theorem~\ref{th:ab51}}

\begin{theorem}
\thmtextd
\label{th:ab51}
\end{theorem}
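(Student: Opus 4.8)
Both assertions are obtained by reasoning directly with the recursive systems — Equations~\ref{eq:i_test_recurrence}--\ref{eq:j_test_recurrence} for \test and Equations~\ref{equation:ab_complexity}--\ref{equation:ab_complexity_j} for \ab — rather than by a pointwise per-tree comparison. Throughout, windows $[\alpha,\beta]$ range over integers with $-n\le\alpha<\beta\le n$. The first step is to prove, by induction ordered lexicographically on $(h,c)$ (first on $h$, then on $c$ for fixed $h$), the joint claim that for every value $x$ and every valid window $[\alpha,\beta]$,
\[
 I_{\text{\ab}}^{x,\alpha,\beta}(h,c)\ \le\ \sum_{s=\alpha+1}^{\beta} I_{\text{\test}}^{x,s}(h,c)
 \quad\text{and}\quad
 J_{\text{\ab}}^{x,\alpha,\beta}(h,c)\ \le\ \sum_{s=\alpha+1}^{\beta} J_{\text{\test}}^{x,s}(h,c).
\]
The base cases are immediate: at $h=0$ each \ab-quantity equals $1$ while the right side is $\beta-\alpha\ge 1$; at $c=0$ both sides vanish. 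For the inductive step, sum the \test recursion over $s=\alpha+1,\dots,\beta$ and match it term by term against the \ab recursion, which has the same $\tfrac1c[\cdot]+\tfrac{c-1}{c}\mathbb{E}[\cdot]$ structure. The child-descent terms correspond under the reindexing $s\mapsto -s+1$, which maps $\{\alpha+1,\dots,\beta\}$ onto $\{-\beta+1,\dots,-\alpha\}$, so the induction hypothesis at height $h-1$ with window $[-\beta,-\alpha]$ bounds each \ab descent term (for the special child with value $-x$, and for the generic children with values $X'\sim\mu$) by the matching \test threshold sum. The substantive point is the "same-level" terms: for a value $v$ (either $x$ or $-X'$) with $v<\beta$, the thresholds $s\in\{\alpha+1,\dots,\beta\}$ surviving \test's cutoff test $\mathds{1}_{\{v<s\}}$ are exactly $\{\max(\alpha,v)+1,\dots,\beta\}$ — precisely the range appearing in the induction hypothesis applied to \ab's recursive call at state $(h,c-1)$ with updated lower bound $\max(\alpha,v)$; when $v\ge\beta$ both \ab's indicator and that threshold set are empty, so the bound is trivial. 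The $J$-equations are handled identically. Evaluating the claim at $c=b$, height $h$, window $[-n,n]$, and averaging $x$ over $\mu$, gives the first assertion of the theorem.

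The second step is a window-monotonicity lemma, again by a lexicographic induction on $(h,c)$: if $[\alpha,\beta]\subseteq[\alpha',\beta']$ then $I_{\text{\ab}}^{x,\alpha,\beta}(h,c)\le I_{\text{\ab}}^{x,\alpha',\beta'}(h,c)$, and likewise for $J$. Reading off Equations~\ref{equation:ab_complexity}--\ref{equation:ab_complexity_j}: the child-descent calls use windows $[-\beta,-\alpha]\subseteq[-\beta',-\alpha']$; the cutoff indicators obey $\mathds{1}_{\{-X'<\beta\}}\le\mathds{1}_{\{-X'<\beta'\}}$ since $\beta\le\beta'$; and the same-level calls use $[\max(\alpha,-X'),\beta]\subseteq[\max(\alpha',-X'),\beta']$ because $\alpha'\le\alpha$ and $\beta\le\beta'$ (similarly for the special-child term with $x$ in place of $-X'$). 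Since all quantities are nonnegative, the bound propagates; and every window that appears stays valid, since a $J$-term or a same-level $I$-term is invoked only where the relevant cutoff indicator forces $v<\beta$.

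The third step derives the equality of branching factors. Recall the null-window identity (Section~\ref{sec:alphabeta_complexit}): \ab with window $[s-1,s]$ is functionally \test$(s)$, and indeed $I_{\text{\ab}}^{x,s-1,s}(h,c)=I_{\text{\test}}^{x,s}(h,c)$ — this can be read off by substituting $\alpha=s-1,\beta=s$ into Equations~\ref{equation:ab_complexity}--\ref{equation:ab_complexity_j} and observing that $\max(s-1,v)=s-1$ whenever $v<s$ (integers), so every window occurring remains the null window $[s-1,s]$. For the upper bound, averaging the first step over the root value gives $I_{\text{\ab}}(h)\le\sum_{s=-n+1}^{n} I_{\text{\test}}^{s}(h)\le 2n\,\max_s I_{\text{\test}}^{s}(h)$, hence $\sqrt[h]{I_{\text{\ab}}(h)}\le(2n)^{1/h}\max_s\bigl(I_{\text{\test}}^{s}(h)\bigr)^{1/h}$; as there are finitely many thresholds and each $\bigl(I_{\text{\test}}^{s}(h)\bigr)^{1/h}$ converges to $r_{\text{\test}}(s)$, the right side tends to $\max_s r_{\text{\test}}(s)=r_{\text{\test}}$, so $\limsup_h\sqrt[h]{I_{\text{\ab}}(h)}\le r_{\text{\test}}$. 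For the lower bound, the monotonicity lemma together with the null-window identity gives $I_{\text{\test}}^{x,s}(h)=I_{\text{\ab}}^{x,s-1,s}(h)\le I_{\text{\ab}}^{x,-n,n}(h)$; averaging over the root value, $I_{\text{\test}}^{s}(h)\le I_{\text{\ab}}(h)$ for every $s$, whence $r_{\text{\test}}(s)\le\liminf_h\sqrt[h]{I_{\text{\ab}}(h)}$ for every $s$, and therefore $r_{\text{\test}}=\max_s r_{\text{\test}}(s)\le\liminf_h\sqrt[h]{I_{\text{\ab}}(h)}$. The two bounds pinch, so $r_{\text{\ab}}$ exists and equals $r_{\text{\test}}$.

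The analytic content is concentrated in the first step: matching the \ab recursion against a \emph{sum} of \test recursions hinges on the exact combinatorial identification of the surviving threshold range $\{\max(\alpha,v)+1,\dots,\beta\}$ with the effect of \ab's $\alpha$-update, carried out simultaneously with the cutoff indicators and with the parent/child reindexing $s\mapsto -s+1$ — making all three line up in every branch of the recursion is the delicate bookkeeping. The monotonicity lemma and the branching-factor argument are then routine, the only non-combinatorial ingredient being the passage from finite-$h$ inequalities to branching-factor inequalities, which uses that each $r_{\text{\test}}(s)$ is a genuine limit (the spectral radius of the associated nonnegative iteration matrix) and that a maximum of finitely many convergent sequences converges to the maximum of their limits.
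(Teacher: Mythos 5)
Your proof is correct and follows essentially the same route as the paper's: the paper proves a binary window-splitting inequality $I_{\text{\ab}}^{x,\alpha,\beta}\le I_{\text{\ab}}^{x,\alpha,\gamma}+I_{\text{\ab}}^{x,\gamma,\beta}$ by the same lexicographic $(h,c)$-induction with the same case analysis on where the child value falls relative to the window, then iterates it down to null windows and invokes the null-window/\test equivalence, whereas you telescope these two steps into a single induction directly against $\sum_{s}I_{\text{\test}}^{x,s}$. Your window-monotonicity lemma and the $2n$-factor sandwich yielding $r_{\text{\ab}}=r_{\text{\test}}$ are exactly the paper's second step (the paper merely asserts the monotonicity in one sentence, which you actually prove).
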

\vspace{-3mm}
This result shows that there is no asymptotic gain of using \alphabeta over a simple \testbrute approach, that calls \test $2n$ times. Moreover, numerical results, presented in Section~\ref{sec:exp} suggest that these two algorithms present a deeper identical behavior: both for the asymptotic limit and the multiplicative constant characterizing the convergence rate. This result is quite remarkable, since the question answered by \alphabeta---determining the precise value of the game---intuitively seems to be much harder than the question answered by \test, that only solves a binary problem. In the next section, we conduct a similar average-case analysis of the \scout algorithm.

\subsection{Analysis of SCOUT} 

\begin{wrapfigure}[13]{r}{0.53\textwidth} 
\vspace{-1.35cm}
\begin{algorithm}[H]
\SetAlgoLined
\DontPrintSemicolon

\KwIn{Current node \textit{N}, search depth $h$, lower-bound $\alpha$, upper-bound $\beta$}
\KwOut{Value of node \textit{N}}

\lIf{$h = 0$}{\Return{N.value} }
\lIf{$\alpha \geq \beta$}{\Return{$\alpha$} }

\ForEach{N' in N.children}{
    $test \leftarrow -\textup{TEST}(N', h-1, -\alpha)$
    
    \If{$test > \alpha$  }{

        $\alpha \leftarrow  -\text{SCOUT}(N', h-1, -\beta, -\alpha-1)\footnotemark{}$
    }
    \lIf{$\alpha \geq  \beta$}{\textbf{break} }
}
\Return{$\alpha$}\;

\caption{SCOUT($N, h,\alpha,\beta$)}\label{alg:scout}
\end{algorithm}
\end{wrapfigure}

\paragraph{Algorithm description} \scout incorporates the \test algorithm into a procedure similar to \alphabeta, described in Algorithm~\ref{alg:scout}. Before evaluating any node, \scout first performs a call to \test to check whether the child’s value strictly exceeds $\alpha$. Only if this test returns true, indicating potential for improvement, does \scout proceed to evaluate the node in full, by a negated recursive call to itself, and updates $\alpha$ to the current (higher) value. Like \alphabeta, it is most often called with $\alpha = -\infty$, but can also be called with any value of $\alpha$ and potentially a parameter $\beta$ too (triggering a cutoff whenever $\alpha$$\geq$$\beta$). At first glance, this approach may appear inefficient: when a \test returns true, subsequent evaluations revisit some leaf nodes already examined during the threshold check. However, previous experimental results~\cite{campbell1983comparison,muszycka1985empirical} suggest that the waste incurred by \scout's reevaluation of some nodes is not substantial. Furthermore, improved variants of \scout---Principal Variation Search (PVS) and NegaScout \cite{reinefeld1983improvement, campbell1983comparison}---are still used in modern game engines~\cite{stockfish}. In this work we focus on the original \scout algorithm, which has a simpler formal analysis: unlike its improved variants, it does not use the certificate value $v$ but only the boolean outcome $\mathds{1}_{v>\alpha}$ of the \test procedure. We believe that PVS and NegaScout improvements should not change the asymptotic behavior of \scout, and we leave their formal analysis for future work.
\vspace{-3mm}
\paragraph{Complexity analysis} Mirroring the \alphabeta analysis, we define $I_{\text{\scout}(h,c)}^{x,\alpha,\beta}$ as the complexity of \scout conditioned on $x$ and $\alpha$$<$$\beta$ and we are interested in the complexity and branching factor for a full-window $\smash{I_{\text{\scout}(h)} = \mathbb{E}_{X'\sim\mu}[I_{\textsc{scout}(h)}^{X', -n,n}}]$ and $r_{\text{\scout}}$. We can write the following recursive equations:\footnotetext{Though this is not standard in the description of \scout, it seems that we can freely increment $\alpha$ by one each time a strict test is proven true, because if $x > \alpha \text{ then }  x\in[\alpha+1,\beta]$, facilitating our formal analysis of \scout.}

\begin{align}
    \begin{split}
        {I_{\text{\scout}(h,c)}^{x,\alpha,\beta}}= \frac 1 c \bigg[{I_{\text{\test}(h-1,b)}^{-x,-\alpha}} +\mathds{1}_{\{ \alpha< x \}}{I_{\text{\scout}(h-1,b)}^{-x, -\beta, -\alpha-1}} + \mathds{1}_{\{x<\beta\}}\ {J_{\text{\scout}(h,c-1)}^{x, \max{(\alpha, x)},\beta}}\bigg]+ \\   
         \frac {c-1}c \mathbb E_{X'\geq-x}\bigg[{I_{\text{\test}(h-1, b)}^{X', -\alpha}}+\mathds{1}_{\{\alpha < -X'\}}I^{X', -\beta,-\alpha-1}_{\text{\scout}(h-1, b)}+\mathds{1}_{\{-X'<\beta\}}I^{x, \max(\alpha, -X'),\beta}_{\text{\scout}(h,c-1)}\bigg],
    \end{split}
    \label{eq:i_scout} \\
    \begin{split}
    J_{\text{\scout}(h,c)}^{x,\alpha,\beta}=\mathbb E_{X'\geq-x}\bigg[{I_{\text{\test}(h-1, b)}^{X', -\alpha}}+\mathds{1}_{\{\alpha<-X'\}}I^{X', -\beta,-\alpha-1}_{\text{\scout}(h-1, b)}+\mathds{1}_{\{-X'<\beta\}}J^{x, \max(\alpha, -X'),\beta}_{\text{\scout}(h,c-1)}\bigg]
    \end{split}
    \label{eq:j_scout}
\end{align}

This defines a system very similar to that of \alphabeta, and extensive numerical studies that we conducted suggest that the branching factor of \scout coincides with that of \test and \alphabeta. We state here an analogous result to the one we obtained for \alphabeta (proof in Appendix~\ref{app:scout_proof}):

\newcommand{\thmtextc}{
\scout called with a full window $\{-n,...,n\}$ is more efficient than the approach consisting in testing every possible value with the \test procedure (i.e., \testbrute). For $h\geq 0$ and $x\in \{-n,...,n\}$:\vspace{-1mm}
\begin{align*}
    I_{\text{\scout}}^{x,-n, n}(h) \leq \sum_{s=-n+1}^{n} I_{\text{\test}}^{x,s}(h) \quad \text{and} \quad r_{\text{\scout}} \leq r_{\text{\test}}.
\end{align*}
\vspace{-1mm}
}

\newtheorem*{Tc}{Theorem~\ref{th:scout}}

\begin{theorem}
\thmtextc\label{th:scout}
\end{theorem}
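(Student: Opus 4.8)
Following the same strategy as the proof of Theorem~\ref{th:ab51}, the plan is to establish, by induction on the height $h$ and --- within each $h$ --- by a secondary induction on the number $c$ of children still to be visited, the per-window strengthening: for all integers $\alpha<\beta$ that arise in the recursion, all $x\in\{-n,\dots,n\}$ and all $c\le b$,
\[
I_{\text{\scout}}^{x,\alpha,\beta}(h,c)\ \le\ \sum_{s=\alpha+1}^{\beta} I_{\text{\test}}^{x,s}(h,c)
\qquad\text{and}\qquad
J_{\text{\scout}}^{x,\alpha,\beta}(h,c)\ \le\ \sum_{s=\alpha+1}^{\beta} J_{\text{\test}}^{x,s}(h,c),
\]
with the convention that a sum over an empty index set ($\alpha\ge\beta$) equals $0$, matching \scout's immediate return on a collapsed window. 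The statement of the theorem is exactly the case $(\alpha,\beta,c)=(-n,n,b)$; the branching-factor inequality is extracted at the end.

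For the inductive step I would expand the left-hand side with Equations~\ref{eq:i_scout} and~\ref{eq:j_scout}, expand the right-hand side with the $s$-sum of Equations~\ref{eq:i_test_recurrence} and~\ref{eq:j_test_recurrence}, and compare, term by term, the ``special child'' contribution (weight $1/c$) and the ``normal child'' contribution (weight $(c-1)/c$, under the expectation over $X'\ge -x$). The heart of the matter is a telescoping over thresholds: on a single child of own value $w$, \scout pays one \test call at threshold $-\alpha$, namely $I_{\text{\test}}^{w,-\alpha}(h-1,b)$, plus --- only when that test succeeds, i.e.\ when the child's contribution exceeds $\alpha$ --- a recursive \scout call on the shrunken window $[-\beta,-\alpha-1]$, which by the induction on $h$ costs at most $\sum_{t=-\beta+1}^{-\alpha-1} I_{\text{\test}}^{w,t}(h-1,b)$; adding these gives at most $\sum_{t=-\beta+1}^{-\alpha} I_{\text{\test}}^{w,t}(h-1,b)=\sum_{s=\alpha+1}^{\beta} I_{\text{\test}}^{w,-s+1}(h-1,b)$, precisely what \testbrute spends on that child across its thresholds $s\in\{\alpha+1,\dots,\beta\}$; when the test fails \scout spends strictly less. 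For the ``continue at the current level'' part, \scout recurses on $I_{\text{\scout}}^{x,\max(\alpha,v),\beta}(h,c-1)$ (or its $J$-variant) with $v$ the child's contribution; the induction on $c$ bounds this by $\sum_{s=\max(\alpha,v)+1}^{\beta} I_{\text{\test}}^{x,s}(h,c-1)$, which equals \testbrute's $\sum_{s=\alpha+1}^{\beta}\mathds{1}_{\{v<s\}}\,I_{\text{\test}}^{x,s}(h,c-1)$ because the indicator restricts the range to $s>v$ and $s>\alpha$, both sides vanishing on a $\beta$-cutoff ($v\ge\beta$). Reassembling the weighted terms reproduces the expansion of $\sum_s I_{\text{\test}}^{x,s}(h,c)$ with every \scout summand dominated by its \test counterpart; the $J$-recursion is handled identically but without the special-child term.

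The delicate point --- and the one I expect to be the real obstacle, because, unlike \alphabeta, the \scout recursion issues two overlapping calls per child (a \test call and possibly a \scout call) --- is the degenerate regime. When $\beta=\alpha+1$ (a null window), the shrunken recursive \scout call on $[-\beta,-\alpha-1]$ has a collapsed window, and if moreover its height is $0$ (a leaf) \scout re-inspects a leaf already counted by the preceding \test call: a direct computation gives $I_{\text{\scout}}^{x,\alpha,\alpha+1}(1)=I_{\text{\test}}^{x,\alpha+1}(1)+\mathds{1}_{\{x\ge\alpha+1\}}$, so the plain per-window bound can overshoot by one there. This never bites for deep trees: in \scout's full-window run a \scout call (rather than a bare \test call) on a node of game-height $h'$ is reached only along a chain of successful tests descending from the root, each step shrinking the window by at least one, so the window at height $h'$ has width at most $2n-(h-h')$; for $h\ge 2n+1$ this is non-positive at heights $0$ and $1$, the re-inspection cannot occur, and the induction above is exact --- which already suffices for the asymptotic claim. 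For the finitely many shallow heights $h\le 2n$ the bound still holds, but one must check that the at-most-one overshoot per visited null-window height-$1$ node is absorbed by the slack that the per-window bound carries at wider windows; making this bookkeeping rigorous is the technical crux of the shallow case, and I would isolate the width-$\le 1$ windows as a separate sub-induction to keep it clean.

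Finally, for the branching factor: for $h\ge 2n+1$ the clean induction gives $I_{\text{\scout}}(h)=\mathbb{E}_{X'\sim\mu}\,I_{\text{\scout}}^{X',-n,n}(h)\le \sum_{s=-n+1}^{n} I_{\text{\test}}^{s}(h)\le 2n\max_{s} I_{\text{\test}}^{s}(h)$; taking $h$-th roots and using that there are only finitely many thresholds $s$, $\sqrt[h]{\,2n\max_s I_{\text{\test}}^{s}(h)\,}=\sqrt[h]{2n}\cdot\max_s\sqrt[h]{I_{\text{\test}}^{s}(h)}\to \max_s r_{\text{\test}}(s)=r_{\text{\test}}$, hence $r_{\text{\scout}}\le r_{\text{\test}}$.
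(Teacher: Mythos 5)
Your main induction is exactly the paper's argument: the same strengthened per-window hypothesis $I_{\text{\scout}}^{x,\alpha,\beta}(h,c)\le\sum_{s=\alpha+1}^{\beta}I_{\text{\test}}^{x,s}(h,c)$ (and its $J$ analogue), the same double induction on $(h,c)$, and the same term-by-term telescoping in which the child's \test call at threshold $-\alpha$ supplies the $s=\alpha+1$ summand while the recursive \scout call on $[-\beta,-\alpha-1]$ is absorbed, after the change of variable $s'=-s+1$, into the summands $s=\alpha+2,\dots,\beta$. Up to that point the proposal is correct and coincides with the paper's Cases 1--3.

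Where you diverge is the null-window degenerate case, and there your proof is not finished. You read Algorithm~\ref{alg:scout} literally (the $h=0$ leaf return firing before the $\alpha\ge\beta$ return), compute the resulting $+\mathds{1}_{\{x\ge\alpha+1\}}$ overshoot at width-one windows of height $1$, and then propose to absorb it, for $h\le 2n$, into ``slack'' carried by wider windows --- a step you explicitly do not carry out, so as written the inequality for shallow trees (which the theorem asserts for all $h\ge 0$) is left unproven. The paper closes this with a convention stated at the start of its proof: a \scout call with a collapsed window $\alpha\ge\beta$ incurs cost $0$ regardless of height, so that $I^{X',-s,-s}_{\text{\scout}}=0$, \scout on a null window $[s-1,s]$ is cost-identical to \textsc{test}$(s)$, the case $\beta=\alpha+1$ is dispatched immediately as an equality, and the induction closes for every $h\ge 0$ with no depth restriction and no bookkeeping. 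Your window-width observation (width at most $2n-(h-h')$ at height $h'$) is correct but only rescues the asymptotic claim $r_{\text{\scout}}\le r_{\text{\test}}$, not the finite-$h$ inequality. Either adopt the collapsed-window-costs-zero convention --- in which case your induction is complete as is --- or, if you insist on the literal leaf-first reading of the pseudo-code, you genuinely owe the absorption argument you only sketch.
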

This result is weaker than Theorem~\ref{th:ab51}, in the sense that we did not manage to prove that \scout is asymptotically equivalent to \test. The missing part is to show that $r_{\text{\scout}} \geq r_{\text{\test}}$, which is suggested by numerical simulations.

In the next section, we experimentally compare all presented algorithms for deep trees and different parametrizations of the \textit{forward model}.

\section{Finite-depth numerical analysis}

\begin{figure}[h]

\centering
\includegraphics[width=1.\textwidth]{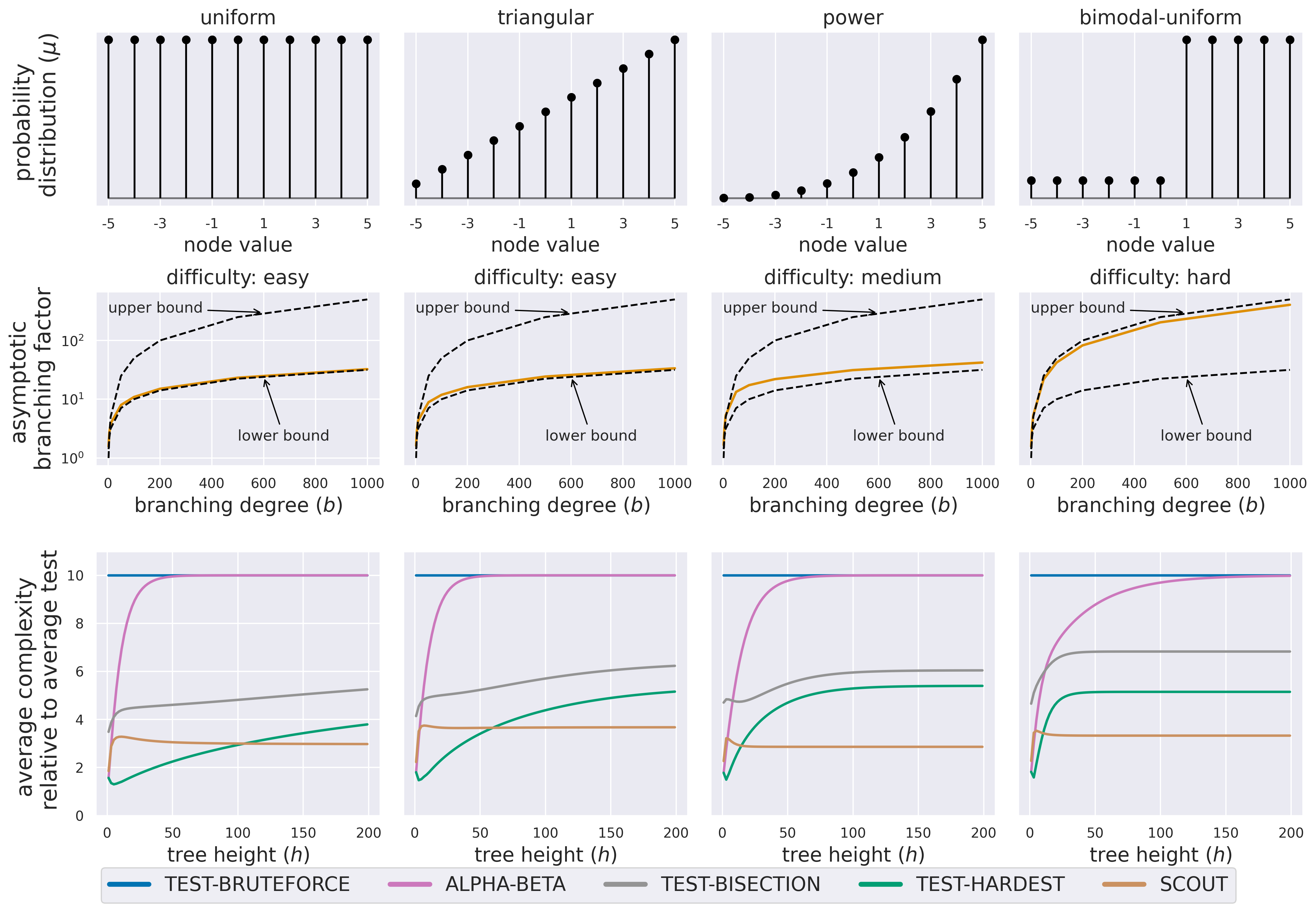}
\vspace{-0.6cm}
\caption{Finite-depth comparison of average-complexities. Each column corresponds to a parametrization of the \textit{forward model}, consisting in the choice of the distribution $\mu$. (Top) Probability mass function of $\mu$ (Middle) Difficulty of the generated instances induced by the choice of $\mu$, measured as the dependency of the branching factor $r$ to the branching degree $b$. (Bottom) Relative average-complexity w.r.t. the average performance of the algorithm \test, this allows us to visualize the sub-exponential multiplicative factor as a function of the height $h$ (with $b=10$, $n=5$). For each parametrization \alphabeta demonstrates the same convergence rate as \testbrute. Conversely, \scout and \testbi exhibit faster convergence, and remarkably \scout even seems to incur a smaller multiplicative constant than \textsc{test-hardest}.\vspace{-0.5cm}}
\label{fig:results}
\end{figure}

\label{sec:exp}
In Section~\ref{sec:alphabeta_complexit} we have established that \test and \alphabeta shared the same asymptotic branching factor, which also coincides with that of \scout numerically. To gain more insights into the behavior of these algorithms, and especially better understand their sub-exponential convergence rates, we conduct a finite-depth experimental analysis, using the recursive equations derived earlier in this work. 
\paragraph{Baselines} In addition to \alphabeta and \scout, we choose to consider two \test-based baselines: (1) \testbrute, which comprehensively applies \test for every threshold $s \in \{-n+1,\ldots,n\}$ and (2) \testbi, which uses a binary search approach to reduce the number of \test trials from $\mathcal{O}(n)$ to $\mathcal{O}(\log n)$. We also introduce the \textsc{test-hardest} baseline which corresponds to the most expensive call to \test across all threshold $s$ values.
\vspace{-2mm}
\paragraph{Parametrization}
To best compare the algorithms, we design diverse instantiations of the \textit{forward model}, with different distributions $\mu$. Ideally, we want these parametrizations to cover a wide range of problem difficulty, from the simplest to the hardest case. We suggest that the asymptotic branching factor (common to all algorithms) $r$ can be interpreted as a measure of the intrinsic game's difficulty. We propose an arbitrary and informal way of classifying the difficulty: for a given distribution $\mu$, value range $n$ and branching degree $b$, if the branching factor $r$ is close to $\smash{\sqrt{b}}$ (the lower bound) then we classify it as easy, similarly if it's close to $b$ (the upper bound) we classify it as hard, otherwise we classify it as medium. For the specific choice of these distributions we start with a uniform distribution---a natural choice that maximizes the diversity (entropy) of root values. Then, we choose distributions that assign increasing probability mass on positive atoms, because we have found empirically that this leads to produce problems of increased difficulty. The intuition is that a node with value $x$ will draw children in $\{-x,..., n\}$, thus the larger $x$, the wider the interval, and the higher the diversity of node values in the generated tree. As displayed in the top row of Figure~\ref{fig:results}, we choose a triangular distribution, a power-law (cubic) distribution and finally a bimodal-uniform distribution with more than $1/b$ mass concentrated on positive atoms (following a criterion similar to that of Theorem~\ref{th:solve1b}).
\vspace{-2mm}
\paragraph{Results} We present in Figure~\ref{fig:results}, the results of the finite-depth average-case complexity comparison. Each column in the figure represents, top to bottom, the probability distribution $\mu$ used, the difficulty this choice induces in the generated game-trees and finally the actual performance of the compared algorithms. To best compare the relative performance of algorithms, we divide the average-complexity by that of \test, computed as an average of all possible values of threshold $s$. This allows us to easily distinguish the algorithm transient regimes by focusing on the sub-exponential multiplicative factor and ignoring the mechanical effect of the complexity increasing with the height of the tree.
A striking result appears: \alphabeta consistently ends up as the worst algorithm across all evaluation setups as well as for every tree size. Remarkably, it seems to mirror the performance of the naive \testbrute baseline for very deep trees. It may suggest that through recursive calls, the $\alpha$ and $\beta$ parameters of \alphabeta rapidly reduce to null-window situations where $\alpha=\beta-1$, and end up comprehensively testing all possible threshold values, thus becoming equivalent to the \testbrute approach. This might be a symptom of a deeper asymptotic equivalence between \alphabeta and \testbrute, which seem to behave identically asymptotically as well as in their sub-exponential constant factor.

Conversely, \scout seems to consistently outperform \alphabeta and to achieve the best performance for every game difficulty. Unlike the conclusions drawn in a fixed-depth analysis under the \textit{standard model}~\cite{pearl1980asymptotic}, it supports the practical superiority of \scout over \alphabeta as hinted in multiple numerical studies~\cite{campbell1983comparison,muszycka1985empirical}. \testbi also displays a strong performance, we believe this comes from its conceptual similarity to the MTD(f) algorithm, whose practical superiority over \alphabeta have been suggested in the past~\cite{plaat1996best}.  Interestingly enough, \scout seems to even outperform \textsc{test-hardest} for deeper trees. This result is counter-intuitive since \scout's cost mostly comes from calls to \test. This probably suggests that \scout behaves like an adaptive version of \test, which updates the threshold value according to the values encountered, unlike \test, which has a fixed threshold value, making it more sensitive to worst-case situations.

\section{Related works}
\label{sec:related}

The average-case analysis of minimax algorithms originated from studying \alphabeta under the \textit{standard model} with independent leaf values \cite{fuller1973analysis, baudet1978branching}, where it achieves optimality \cite{pearl1982solution,tarsi1983optimal}. Subsequent algorithms, \textsc{scout}~\cite{pearl1980asymptotic}, MTD(f)~\cite{plaat1994new} and PVS (sometimes called NegaScout)~\cite{reinefeld1983improvement} were proven asymptotically equivalent. Critiques of the \textit{standard model} highlight its unrealistic independence assumption~\cite{knuth1975analysis, nau1982lastplayer}. Alternative models introduce ancestor dependencies, e.g., the \textit{incremental model} maintains a heuristic value at each node, and defines leaf values as the sum of values along the path to the root node. Only limited settings of this formulation of the \textit{incremental model} have been successfully analyzed~\cite{devroye1996random, newborn1977efficiency} and a more general analysis is yet to be proven feasible. Prior works also explore the idea of forward tree generation~\cite{scheucher1998benefits,furtak2009minimum}, where values are generated top-down like in our \textit{forward model}. The closest model to ours is the Prefix Value Game Tree Model~\cite{furtak2009minimum}, it is an instantiation of the \textit{incremental model} where heuristic values of nodes are computed additively from their parents, and one child is always attributed a zero increment, this resembles our \textit{forward model} where one child inherits the value of the parent. However, to our knowledge, this model has not yet been successfully theoretically analyzed before. Moreover, we are confident that the analysis techniques we developed in this work could be very simply adapted to it.

 Another popular approach in modern game solving is Monte-Carlo Tree Search (MCTS)~\cite{kocsis2006bandit, delattre2019monte}, however, a common theoretical average-complexity analysis under a shared framework with minimax-based game-solving algorithms is currently missing. This is explained by technical reasons. MCTS algorithms only offer asymptotic convergence guarantees (vs. deterministic guarantees) and their convergence rate can be doubly-exponential in the size of the tree \cite{coquelin2007bandit,orseau2024super} instead of at most exponential for minimax algorithms. Furthermore, while we believe extending our analysis to an MCTS-based algorithm is possible in principle, it presents a formidable technical challenge as the algorithm's states depend on visit counts and empirical rewards at each node, this would make the equations substantially more involved, and we view it as an important direction for future research. Finally, although MCTS-based algorithms are state-of-the-art in multiple settings, exact minimax algorithms still remain practically useful in chess engines~\cite{stockfish} and hybrid (RL+solving) approaches~\cite{cohen2020minimax}.

\section{Discussion and limitations}
\label{sec:discussion}  

\paragraph{Discussion} In this work, we introduced the \textit{forward model}, a simple synthetic game model which provably addresses limitations of previous models while retaining some form of analytical tractability. For binary-valued trees, we established a closed-form expression for \solve's average-complexity and branching factor, and showed that the \textit{F-games} were of adjustable difficulty and could span from maximally easy to maximally hard instances. For discrete-valued trees, we characterized the behavior of the \test, \alphabeta and \scout algorithms with equations, allowing a fast and efficient computation of the complexity and asymptotic branching factor, intractable with Monte-Carlo simulations. Unlike previous analysis under the \textit{standard model}, we didn't manage to find closed-form expressions for the branching factors of all studied algorithms and we leave this open for future work. However, we established that \test and \alphabeta share the same branching factor, and we hypothesize that \scout shares it as well. This property was numerically confirmed by extensive numerical experiments, which further revealed that \alphabeta incurred a larger sub-exponential factor than other approaches, suggesting that it is a poor baseline for practical scenarios.

\paragraph{Limitations} Our work focused on discrete-valued trees, which in our opinion best represent real-world games, and are numerically cheaper to solve. That being said, the equations we derived for the complexity analysis of \test, \alphabeta and \scout are, with small modifications, applicable to continuous values, opening avenues for an extended analysis of \textit{F-games}. An important disclaimer is that we do not claim that our model accurately represents real-world games dynamics: it suffers from several important limitations that are consequences of its simplicity. First, the model only generates uniform $b$-ary trees of constant depth, which does not reflect the variable branching degrees and depths found in real games. Extending the analysis to more complex tree structures is a direction for future work. Second, the conditional distribution used to generate child values is the same for every node in the tree. It depends only on the parent's value, not on the broader game state. Furthermore, the method of ensuring the minimax constraint (truncating the distribution) is just one of many possibilities; other, more complex transformations could be valid but may also be harder to analyze. Regarding the choice of algorithms, we focused on \test, \scout and \alphabeta, but an analysis of MTD(f) and PVS would be an insightful extension, though they may prove more difficult to conduct since they require modeling the distribution of the test certificate value (and not only its binary outcome). However, we believe that \scout is a good proxy for PVS, and that PVS optimizations should not change asymptotic properties. Similarly, we think \testbi is good proxy for MTD(f) where instead of choosing the next threshold using the certificate value of the last iterate---as in MTD(f)---we select it with a simple bisection rule. Of course, this is only speculation, which is why we believe it’s important that PVS and MTD(f) are properly analyzed in the future under the \textit{forward model}.

\section{Broader impact}
While our work focuses on classical deterministic game-solving algorithms, we believe it contributes to ongoing discussions at the intersection of learning, planning, and decision-making. Search remains a core ingredient of modern AI systems: top-performing agents such as AlphaZero combine learning with planning, yet the theoretical understanding of deterministic search methods (e.g., AlphaBeta) remains limited. Our analysis introduces a simple average-case model that incorporates structural dependencies in search trees, bridging a gap in the theoretical understanding of deterministic solvers. By providing a clearer picture of their expected behavior, our work may help guide the design of hybrid systems that integrate learning and deterministic planning.
\newpage

\bibliographystyle{plainnat}
\bibliography{sample}

\newpage
\appendix








\section{Proof material}
In this section we provide detailed proofs for the results established in the paper, that were not included in the main text due to space and readability constraints.
\subsection{Proof of Theorem~\ref{th:solve1b}}
\label{app:solve_proof}
We first recall the result:
\begin{Ta}
    \thmtexta
\end{Ta}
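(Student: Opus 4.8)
\subsection*{Proof proposal}

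The plan is to argue directly from the closed form of the branching factor recorded in Equation~\ref{eq:solve_r_t} (whose derivation lives in Appendix~\ref{app:solve}), namely
$r_{\text{\solve}} = \tfrac12\bigl(t(q,b)+\sqrt{t(q,b)^2+4b}\bigr)$ with $t(q,b)=\sum_{k=1}^{b-1}\tfrac{1+(b-k-1)q}{b}\,k(1-q)^k$. The first step is the elementary observation that the map $t\mapsto\tfrac12(t+\sqrt{t^2+4b})$ is increasing on $[0,\infty)$ and is squeezed between $t$ and $t+\sqrt b$ (since $\sqrt{t^2+4b}\le t+2\sqrt b$ for $t\ge 0$). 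Hence $r_{\text{\solve}}=\Theta(b)$ — in particular $r_{\text{\solve}}=\mathcal O(b)$ — will follow as soon as $t(q,b)$ is shown to be of order $b$, uniformly over $q\in[0,\tfrac1b]$, which I address in the next two steps.

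For the upper bound I would bound the summand termwise: for $q\in[0,\tfrac1b]$ and $1\le k\le b-1$ one has $1+(b-k-1)q\le 1+\tfrac{b-1}{b}\le 2$ and $(1-q)^k\le 1$, so $t(q,b)\le \tfrac2b\sum_{k=1}^{b-1}k=b-1$, giving $r_{\text{\solve}}\le (b-1)+\sqrt b=\mathcal O(b)$. (This direction is of course already implied by the universal bound $r_A\le b$ from Section~\ref{sec:preliminaries}; I would keep the short computation only because the matching lower bound is the substantive content of the theorem.)

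For the lower bound I would again work termwise: for $1\le k\le b-1$ we have $1+(b-k-1)q\ge 1$, and $(1-q)^k\ge (1-q)^{b-1}\ge \bigl(1-\tfrac1b\bigr)^{b-1}\ge e^{-1}$, invoking the classical inequality $(1-\tfrac1m)^{m-1}\ge e^{-1}$. This yields $t(q,b)\ge \tfrac{1}{eb}\sum_{k=1}^{b-1}k=\tfrac{b-1}{2e}$, hence $r_{\text{\solve}}\ge t(q,b)\ge \tfrac{b-1}{2e}=\Omega(b)$. Combining the two estimates gives $r_{\text{\solve}}=\Theta(b)$ for every $q\in[0,\tfrac1b]$, which proves the claim.

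The subtle point — and the only place the argument can go wrong — is the uniform control of the geometric factors $(1-q)^k$: the naive bound $(1-q)^k\ge 1-kq$ degrades to a useless $\Omega(1)$ estimate when $k$ is close to $b-1$, so one must instead exploit that, because $q\le 1/b$, the factor $(1-q)^k$ stays bounded away from $0$ on the \emph{entire} range $k\le b-1$, allowing the Gauss sum $\sum_{k\le b-1}k=\Theta(b^2)$ to survive the $\tfrac1b$ normalisation; the inequality $(1-\tfrac1m)^{m-1}\ge e^{-1}$ is precisely what makes this rigorous. A secondary bookkeeping check is that the closed form of Equation~\ref{eq:solve_r_t} is valid for all $b\ge 2$ and $q\in[0,1]$, so that the endpoints $q=0$ and $q=\tfrac1b$ and the trivial case $b=1$ are covered without a separate argument.
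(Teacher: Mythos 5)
Your proof is correct, and it takes a genuinely different route from the paper's. The paper first argues that $t(\cdot,b)$ is monotonically decreasing in $q$ on $[0,1]$ (via a sign analysis of $\partial t/\partial q$ that is asserted rather than detailed), which reduces the claim to the single endpoint $q=\tfrac1b$, and then evaluates $t(\tfrac1b,b)$ asymptotically using geometric-sum expansions and $(1-\tfrac1b)^b=e^{-1}+o(1)$, obtaining $t_b=e^{-1}b+o(b)$. You instead bound the sum termwise, uniformly over $q\in[0,\tfrac1b]$: the weights $1+(b-k-1)q$ lie in $[1,2]$ and the geometric factors satisfy $(1-q)^k\ge(1-\tfrac1b)^{b-1}\ge e^{-1}$, so the Gauss sum gives $\tfrac{b-1}{2e}\le t(q,b)\le b-1$ directly, hence $r_{\text{\solve}}=\Theta(b)$. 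Both arguments correctly read the theorem's $\mathcal O(b)$ as the substantive claim $r_{\text{\solve}}=\Omega(b)$ (the upper bound being immediate from the universal $r_A\le b$). Your route is more elementary and self-contained --- no monotonicity lemma, no asymptotic expansion, and explicit non-asymptotic constants valid for every $b\ge 2$; what it gives up is sharpness and structure: the paper's computation pins down the constant $e^{-1}$ at $q=\tfrac1b$, and its monotonicity claim is also what underwrites the surrounding discussion that the hardness is tunable across the whole range $[\sqrt b, \Theta(b)]$ as $q$ varies. Two small remarks: for $b=1$ your termwise lower bound is vacuous (the sum is empty and $t=0$), but there $\sqrt b=b=1$ forces $r_{\text{\solve}}=1=\Theta(b)$ anyway; and your cautionary note that $(1-q)^k\ge 1-kq$ would be ``useless'' is overstated --- that cruder bound still yields $t(q,b)\ge \tfrac1b\sum_k k(1-kq)\approx \tfrac b6$ for $q\le\tfrac1b$ --- though your $e^{-1}$ bound is cleaner and gives a better constant.
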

\begin{proof}
Recall that
\begin{align*}
    t(q,b) = \sum_{k=1}^{b-1} \frac{1 + (b - k - 1)q}{b} k (1 - q)^k.
\end{align*}

By analyzing the sign of $\frac {\partial t}{\partial q}(q,b)$ and showing it's non-positive for $q\in [0,1]$ we can establish that $t$ is monotonically decreasing in $q$ on  $[0,1]$. Then, we only have to show that $t_b = t(q=1/b, b) = \mathcal{O}(b)$, for the result to be true for all $q\in[0,\frac 1 b]$. First we separate positive and negative sums:
\begin{align}
    t_b &= \sum_{k=1}^{b-1} \frac{1 + (b - k - 1)\frac 1 b}{b} k (1 - \frac 1 b)^k \\
    &= \frac 2 b \sum_{k=1}^{b-1}  k (1 - \frac 1 b)^k - \frac 1 {b^2} \sum_{k=1}^{b-1} k(k+1)  (1 - \frac 1 b)^k.
\end{align}

By expanding both sums (with geometrical sum expansions) and using the fact that:
\begin{equation}
    (1-\frac 1 b)^b=e^{-1}+ o(1),
\end{equation}
we can show that:
\begin{align}
    &\frac 2 b\sum_{k=1}^{b-1} k(1-\frac 1 b)^k = (2-4e^{-1})b+o(b) \\
    \textnormal{and }&\frac 1 {b^2}\sum_{k=1}^{b-1} k(k+1)(1-\frac 1 b)^k = (2-5e^{-1})b+o(b),
\end{align} 
allowing us to conclude that $t_b = e^{-1}b +  o(b) = \mathcal{O}(b)$. This terminates the proof. As a side note, if we consider instead $q=\frac 1 {b^a}$ for $a>1$, we can show that $t_b = \frac b 2 + o(b)$, which is slightly higher than $e^{-1} b$ (because $e^{-1}\approx 0.37 < \frac 1 2$) and asymptotically reaches the same constant as in Equation~\ref{eq:saks}.
\end{proof}

\subsection{Proof of Theorem~\ref{th:ab51}}
\label{app:ab_proof}
We start by proving an intermediate property, useful for proving Theorem~\ref{th:ab51}.

\newcommand{\thmtextb}{
    Evaluating \alphabeta with a given window is always more efficient than splitting this window into two sub-windows and evaluating these sub-windows separately. 
    
    For $h\geq 0$ and $x\in [-n,n]$, for all $\alpha,\beta \in \mathbb N$ such that $\alpha < \beta -1$ and for all $\gamma \in \mathbb N$ such that $\alpha < \gamma < \beta$:
    \begin{align*}
           I_{\text{\ab}}^{x,\alpha, \beta}(h) \leq I_{\text{\ab}}^{x,\alpha, \gamma}(h) + I_{\text{\ab}}^{x, \gamma,\beta}(h).
    \end{align*}}
\newtheorem*{Tb}{Proposition~\ref{prop:alphabeta}}

\begin{proposition}
    \thmtextb
    \label{prop:alphabeta}
\end{proposition}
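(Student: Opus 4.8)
The plan is to prove the inequality by induction on the height $h$, establishing simultaneously a stronger statement that also covers the auxiliary function $J_{\text{\ab}}$ and, crucially, the partially-evaluated states $I_{\text{\ab}}^{x,\alpha,\beta}(h,c)$ for every $c \le b$. The base case $h = 0$ is immediate: a leaf costs $1$ on the left and $1 + 1$ on the right, so the inequality holds (with slack). For the inductive step I would fix $h$, assume the claim at height $h-1$ for all windows and all split points, and then prove it at height $h$ by a \emph{second, inner induction on the number of remaining children} $c$, running from $c = 0$ (where both sides are $0$) up to $c = b$.

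\textbf{Key steps of the inductive step.} For the inner induction, I would write out the recursion (Equation~\ref{equation:ab_complexity}) for $I_{\text{\ab}}^{x,\alpha,\beta}(h,c)$, $I_{\text{\ab}}^{x,\alpha,\gamma}(h,c)$, and $I_{\text{\ab}}^{x,\gamma,\beta}(h,c)$, and compare term by term under the same coupling of the random child value $X'$ (and of the index of the special child). Each recursion splits into a child-evaluation cost at height $h-1$ plus, if no cutoff occurs, a continuation cost at height $h$ with $c-1$ children. For the \emph{child-evaluation} part, the term on the left is $I_{\text{\ab}}^{X',-\beta,-\alpha}(h-1,b)$ and on the right $I_{\text{\ab}}^{X',-\gamma,-\alpha}(h-1,b) + I_{\text{\ab}}^{X',-\beta,-\gamma}(h-1,b)$; since $-\beta < -\gamma < -\alpha$, the outer induction hypothesis at height $h-1$ gives exactly $I_{\text{\ab}}^{X',-\beta,-\alpha}(h-1) \le I_{\text{\ab}}^{X',-\gamma,-\alpha}(h-1) + I_{\text{\ab}}^{X',-\beta,-\gamma}(h-1)$, which is what we need (using the end-case $c=b$ convention $I(h-1)=I(h-1,b)$). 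For the \emph{continuation} part, I would carefully match the cutoff indicators $\mathds{1}_{\{-X'<\beta\}}$, $\mathds{1}_{\{-X'<\gamma\}}$, $\mathds{1}_{\{-X'<\beta\}}$ and the updated lower bounds $\max(\alpha,-X')$, $\max(\gamma,-X')$, and then invoke the inner induction hypothesis at $(h,c-1)$ to bound the left continuation by the sum of the two right continuations — being attentive to case analysis on where $-X'$ falls relative to $\gamma$ and $\beta$.

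\textbf{Main obstacle.} The delicate point — and where I expect to spend most of the effort — is the bookkeeping in the continuation term when $-X'$ (or $x$, for the special child) lies in the interval $[\gamma, \beta)$. In that range the left process keeps searching with lower bound raised to $\max(\alpha,-X') \ge \gamma$, while on the right the first sub-window $[\alpha,\gamma]$ has already cut off (since $-X' \ge \gamma$) and only the second sub-window $[\gamma,\beta]$ continues, with its lower bound $\max(\gamma,-X') = \max(\alpha,-X')$. So the left continuation must be compared against a \emph{single} right continuation with the \emph{same} updated window — this is not the split inequality but rather a monotonicity-type statement, namely that $I_{\text{\ab}}^{x,\alpha',\beta}(h,c-1) \le I_{\text{\ab}}^{x,\gamma,\beta}(h,c-1)$ when $\alpha' = \max(\alpha,-X') = \max(\gamma,-X')$ so in fact the two are literally equal; the subtlety is instead that on the right we have thrown away the (already-terminated) first sub-window's bookkeeping and must check we are not undercounting. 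Symmetrically, when $-X' < \gamma$ the first sub-window $[\alpha,\gamma]$ is still active and the second $[\gamma,\beta]$ has not yet been entered, so one must track that the right side's total still dominates. Making this case split airtight, and keeping the indicator algebra consistent across all three recursions simultaneously, is the crux; once the coupling and the case analysis are set up correctly, each case reduces to either the outer hypothesis at $h-1$ or the inner hypothesis at $c-1$, plus the trivial observation that an extra nonnegative term on the right side only helps. I would also double-check that the special-child term (probability $1/c$, value forced to $-x$) obeys the same analysis, which it does since it is the $X' = -x$ instance of the generic term with the sampling expectation removed.
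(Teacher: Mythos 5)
Your plan coincides with the paper's proof: a double induction on $(h,c)$, carried out simultaneously for $I_{\text{\ab}}$ and $J_{\text{\ab}}$, comparing the recursions term by term for each value of $X'$ (including the special child as the $X'=-x$ instance), using the height-$(h-1)$ hypothesis on the negated windows $-\beta<-\gamma<-\alpha$ for the child-evaluation terms and the $(h,c-1)$ hypothesis for the continuation terms. Your case analysis on the position of $-X'$ relative to $\gamma$ and $\beta$ — in particular the observation that for $-X'\in[\gamma,\beta)$ the left and right continuations carry the identical updated window, and that elsewhere the extra nonnegative $J$ term on the right only helps — is exactly how the paper closes each case, so the approach is correct and essentially identical.
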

\begin{proof}

Let's recall the expression of the complexity of \alphabeta $I_{\text{\ab}}^{x,\alpha, \beta}(h, c)$ and it's auxiliary function $J_{\text{\ab}}^{x,\alpha,\beta}(h, c)$:
\begin{equation}
\begin{split}
    I_{\text{\ab}}^{x,\alpha, \beta}(h, c)=
\frac 1 c \Biggl[ I_{\text{\ab}}^{-x,-\beta,-\alpha}(h-1, b) + \mathds{1}_{\{x<\beta\}}\ J^{x,\max(\alpha,x), \beta}_{\text{\ab}}(h,c-1) \Biggr]\\ + \frac {c-1}c \underset{\substack{X' \sim \mu \\ X'\geq-x }}{\mathbb E}\Biggl[I_{\text{\ab}}^{X',-\beta,-\alpha}(h-1, b)+\mathds{1}_{\{-X'<\beta\}}I_{\text{\ab}}^{x,\max(\alpha,-X'), \beta}(h, c-1) \Biggr],
\end{split}
\end{equation}
\begin{equation}
J_{\text{\ab}}^{x,\alpha,\beta}(h, c)=
\underset{\substack{X' \sim \mu \\ X'\geq-x }}{\mathbb E}\Biggl[I_{\text{\ab}}^{X',-\beta, -\alpha}(h-1, b)+\mathds{1}_{\{-X'<\beta\}}J_{\text{\ab}}^{x,\max(\alpha, -X'), \beta}(h, c-1) \Biggr].
\end{equation}

We conduct the proof for the function $I$ and $J$ altogether, \textit{i.e.} we want to prove that for all $\alpha <\gamma< \beta$ and all integers $h,c$ we have:
$$I_{\text{\ab}}^{x,\alpha, \beta}(h,c) \leq I_{\text{\ab}}^{x,\alpha, \gamma}(h,c) + I_{\text{\ab}}^{x, \gamma,\beta}(h,c) $$
and
$$J_{\text{\ab}}^{x,\alpha, \beta}(h,c) \leq J_{\text{\ab}}^{x,\alpha, \gamma}(h,c) + J_{\text{\ab}}^{x, \gamma,\beta}(h,c) $$.

Let's proceed by double induction on the integers $h$ and $c$.

\paragraph{Base cases}  for all $h$, $I_{\text{\ab}}^{x,\alpha,\beta}(h, 0)=0$ and  $I_{\text{\ab}}^{x,\alpha, \gamma}(h,0) + I_{\text{\ab}}^{x, \gamma,\beta}(h,0) =0$. Similarly for all $c>0$, $I_{\text{\ab}}^{x,\alpha,\beta}(0, c)=1$ and  $I_{\text{\ab}}^{x,\alpha, \gamma}(0,c) + I_{\text{\ab}}^{x, \gamma,\beta}(0,c) =2$. It follows identically for J. So the base cases hold.

\paragraph{Induction step} now we assume that the property holds for $(h, c-1)$ and $(h-1,b)$. We'll detail here the induction step for the function $J$, as it is less cumbersome to write, but the proof for $I$ follows the exact same steps. First, let's remark that the expectation involves a sum of terms, and let's try to prove the inequality holds term by term. Let $X'\geq -x$, first if $\mu(X')=0$, the inequality holds trivially, so we consider without loss of generality $\mu(X')>0$. Let's define:
$$A = I_{\text{\ab}}^{X',-\beta, -\alpha}(h-1, b)+\mathds{1}_{\{-X'<\beta\}}J_{\text{\ab}}^{x,\max(\alpha, -X'), \beta}(h, c-1)$$
and
\begin{align*}
    \begin{split}
    B = I_{\text{\ab}}^{X',-\gamma, -\alpha}(h-1, b)+\mathds{1}_{\{-X'<\gamma\}}J_{\text{\ab}}^{x,\max(\alpha, -X'), \gamma}(h, c-1) + \\I_{\text{\ab}}^{X',-\beta, -\gamma}(h-1, b)+\mathds{1}_{\{-X'<\beta\}}J_{\text{\ab}}^{x,\max(\gamma, -X'), \beta}(h, c-1)
\end{split}
\end{align*}
and show that $A \leq B$.

\paragraph{Case 1} If $\beta < -X'$, then $\mathds{1}_{\{-X'<\beta\}} =\mathds{1}_{\{-X'<\gamma\}} = 0$, the property then holds using the induction hypothesis for $(h-1,b)$.

\paragraph{Case 2} Now, if $\gamma \leq -X' < \beta$, $\mathds{1}_{\{-X'<\beta\}}=1$ and $\mathds{1}_{\{-X'<\gamma\}} = 0$. Moreover, $\max(\gamma,-X')=-X'$ and $\max(\alpha,-X')=-X'$. So:
$$A = I_{\text{\ab}}^{X',-\beta, -\alpha}(h-1, b)+J_{\text{\ab}}^{x,-X', \beta}(h, c-1)$$
and\begin{align*}
    \begin{split}
        B=I_{\text{\ab}}^{X',-\gamma, -\alpha}(h-1, b) + I_{\text{\ab}}^{X',-\beta, -\gamma}(h-1, b)+J_{\text{\ab}}^{x,-X', \beta}(h, c-1).
    \end{split}
\end{align*}
The inequality also directly holds using the induction hypothesis for $(h-1,b)$.

\paragraph{Case 3} Now, if $\alpha \leq -X' < \gamma$,  $\mathds{1}_{\{-X'<\beta\}}=1$ and $\mathds{1}_{\{-X'<\gamma\}} = 1$. Moreover, $\max(\gamma,-X')=\gamma$ and $\max(\alpha,-X')=-X'$. So:
$$A = I_{\text{\ab}}^{X',-\beta, -\alpha}(h-1, b)+J_{\text{\ab}}^{x,-X', \beta}(h, c-1)$$
and\begin{align*}
    \begin{split}
        B=I_{\text{\ab}}^{X',-\gamma, -\alpha}(h-1, b) + J_{\text{\ab}}^{x,-X', \beta}(h, c-1)+I_{\text{\ab}}^{X',-\beta, -\gamma}(h-1, b)+J_{\text{\ab}}^{x,\gamma, \beta}(h, c-1)
    \end{split}.
\end{align*}
Here again, the inequality holds by induction hypothesis on $(h,b)$ and using the fact $J_{\text{\ab}}^{x,\gamma, \beta}(h, c-1)$ is non-negative.

\paragraph{Case 4} Finally, if $-X' < \alpha$, we obtain:
$$A = I_{\text{\ab}}^{X',-\beta, -\alpha}(h-1, b)+J_{\text{\ab}}^{x,\alpha, \beta}(h, c-1)$$
and
\begin{align*}
    \begin{split}
        B=I_{\text{\ab}}^{X',-\gamma, -\alpha}(h-1, b) + J_{\text{\ab}}^{x,\alpha, \beta}(h, c-1)+I_{\text{\ab}}^{X',-\beta, -\gamma}(h-1, b)+J_{\text{\ab}}^{x,\gamma, \beta}(h, c-1).
    \end{split}
\end{align*}

Here the inequality holds using both the induction steps at $(h-1, b)$ and $(h, c-1)$. We have covered all possible values of $X'$, so the proof is concluded.
\end{proof}

Now we recall the theorem of interest:
\begin{Td}
    \thmtextd
\end{Td}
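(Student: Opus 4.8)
The statement has two parts: the inequality $I_{\text{\ab}}^{x,-n,n}(h) \leq \sum_{s=-n+1}^{n} I_{\text{\test}}^{x,s}(h)$, and the equality of branching factors $r_{\text{\ab}} = r_{\text{\test}}$. The plan is to obtain the inequality first by a telescoping argument using Proposition~\ref{prop:alphabeta}, and then to derive the equality of branching factors by combining this inequality with a matching reverse inequality $r_{\text{\test}} \leq r_{\text{\ab}}$, which should come essentially for free from the null-window identity relating \test and \alphabeta.

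\textbf{Step 1: the inequality.} The full window $[-n,n]$ can be written as the concatenation of the $2n$ unit windows $[s-1,s]$ for $s \in \{-n+1,\dots,n\}$. Applying Proposition~\ref{prop:alphabeta} repeatedly (a straightforward induction on the number of split points, peeling off one unit sub-window at a time), I get
\begin{align*}
I_{\text{\ab}}^{x,-n,n}(h) \leq \sum_{s=-n+1}^{n} I_{\text{\ab}}^{x,s-1,s}(h).
\end{align*}
Then I invoke the well-known null-window equivalence mentioned in the \alphabeta algorithm description: calling \alphabeta with window $[s-1,s]$ is functionally identical to \textsc{test($s$)} and incurs exactly the same number of leaf inspections, i.e. $I_{\text{\ab}}^{x,s-1,s}(h) = I_{\text{\test}}^{x,s}(h)$. (If the paper wants this airtight, one checks that Equations~\ref{equation:ab_complexity}--\ref{equation:ab_complexity_j} with $\beta = \alpha+1$ reduce exactly to Equations~\ref{eq:i_test_recurrence}--\ref{eq:j_test_recurrence} with threshold $s$: the $\max(\alpha,\cdot)$ updates never change $\alpha$ since the window has width one, and the cutoff condition $-X'<\beta$ becomes $-X'<s$.) Substituting gives the claimed bound. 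Since $I_{\text{\test}}^{x,s}(h)$ is obtained by averaging over the root value distribution $\mu$, taking expectations preserves the inequality for $I_{\text{\ab}}(h)$ as well.

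\textbf{Step 2: branching factors.} From Step 1, $I_{\text{\ab}}(h) \leq \sum_{s} I_{\text{\test}}^{s}(h) \leq 2n \cdot \max_s I_{\text{\test}}^{s}(h)$, so taking $h$-th roots and letting $h\to\infty$ yields $r_{\text{\ab}} \leq \max_s r_{\text{\test}}(s) = r_{\text{\test}}$, the factor $2n$ being absorbed in the limit. For the reverse direction, fix the threshold $s^\star$ achieving $r_{\text{\test}} = r_{\text{\test}}(s^\star)$; by the null-window equivalence, $I_{\text{\test}}^{x,s^\star}(h) = I_{\text{\ab}}^{x,s^\star-1,s^\star}(h)$, and I claim $I_{\text{\ab}}^{x,s^\star-1,s^\star}(h) \leq I_{\text{\ab}}^{x,-n,n}(h)$ — narrowing the $[\alpha,\beta]$ window can only add cutoffs and never removes any leaf inspection, so a full-window search inspects at least as many leaves as any null-window search on the same tree; this monotonicity in the window should be provable by the same double induction on $(h,c)$ used for Proposition~\ref{prop:alphabeta}, comparing the recursions term by term. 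Taking $h$-th roots gives $r_{\text{\test}} \leq r_{\text{\ab}}$, hence equality.

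\textbf{Main obstacle.} The telescoping in Step 1 and the root-extraction in Step 2 are routine. The delicate point is making the null-window $\leftrightarrow$ \test identification and the window-monotonicity claim rigorous at the level of the recursive equations rather than merely at the level of the algorithms: one must verify that the auxiliary $J$ functions line up correctly and that the $\max(\alpha,\cdot)$ bookkeeping in Equations~\ref{equation:ab_complexity}--\ref{equation:ab_complexity_j} genuinely collapses when $\beta=\alpha+1$. I expect this to be the part of the proof that needs the most care, and it is where a clean lemma stating "$I_{\text{\ab}}^{x,\alpha,\beta}$ is non-increasing as the window $[\alpha,\beta]$ shrinks, and equals $I_{\text{\test}}^{x,\beta}$ when $\beta=\alpha+1$" would do most of the work.
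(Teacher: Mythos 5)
Your proof is correct and follows essentially the same route as the paper: telescoping the full window into unit windows via Proposition~\ref{prop:alphabeta}, identifying each null-window call with \test, and sandwiching $I_{\text{\ab}}^{x,-n,n}(h)$ between $\max_s I_{\text{\test}}^{x,s}(h)$ and $2n\max_s I_{\text{\test}}^{x,s}(h)$ before taking $h$-th roots. The only difference is that you explicitly flag the null-window reduction and the window-monotonicity of $I_{\text{\ab}}$ as lemmas requiring their own inductions, whereas the paper asserts both without proof.
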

\begin{proof}
    The first part of the theorem is obtained by iteratively applying Proposition~\ref{prop:alphabeta} with $\alpha=-n$ and $\beta=n$ and choosing $\gamma=-n+1$, then $\gamma = -n+2$ and so on, until $\gamma=n-1$.\\
    The second part can be deducted by remarking that we have: 
    \begin{align*}
               \max_{s}I_{\text{\test}}^{x,s}(h)\leq I_{\text{\ab}}^{x,-n, n}(h)& \leq \sum_{s} I_{\text{\test}}^{x,s}(h)\leq 2n\max_{s}I_{\text{\test}}^{x,s}(h) .
    \end{align*}
    The left-hand-side inequality reflects the fact that a smaller $\alpha$$-$$\beta$ window results in evaluating strictly less nodes --- note that this doesn't hold for \scout, due to non-monotonicity of \test's complexity with respect to the threshold value $s$. By taking power $1/h$ on both sides and taking the limit in $+\infty$ this gives us the desired result.
\end{proof}

\subsection{Proof of Theorem~\ref{th:scout}}
\label{app:scout_proof}
We first recall the result:
\begin{Tc}
    \thmtextc
\end{Tc}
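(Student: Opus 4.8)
The plan is to prove, by double induction on $h$ and $c$ (the same induction skeleton as in the proof of Proposition~\ref{prop:alphabeta} / Theorem~\ref{th:ab51}), the following windowed strengthening: for every $x$, every $c$, and all bounds $\alpha,\beta$ — with the conventions that an empty sum is $0$ and $I_{\text{\scout}}=J_{\text{\scout}}=0$ when $\alpha\ge\beta$ —
\[
I_{\text{\scout}}^{x,\alpha,\beta}(h,c)\ \le\ \sum_{s=\alpha+1}^{\beta} I_{\text{\test}}^{x,s}(h,c)
\qquad\text{and}\qquad
J_{\text{\scout}}^{x,\alpha,\beta}(h,c)\ \le\ \sum_{s=\alpha+1}^{\beta} J_{\text{\test}}^{x,s}(h,c).
\]
Crucially, I would \emph{not} mimic the \alphabeta argument by first proving a window-subadditivity proposition for \scout: the analogue of Proposition~\ref{prop:alphabeta} does not seem to hold (\scout's recursion does not split a window cleanly, and indeed \scout's cost is not monotone in the window, which is why the theorem only claims $r_{\text{\scout}}\le r_{\text{\test}}$). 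Instead I compare \scout \emph{directly} to the sum of \test runs, exploiting that Equations~\ref{eq:i_scout}--\ref{eq:j_scout} and Equations~\ref{eq:i_test_recurrence}--\ref{eq:j_test_recurrence} have the very same $\tfrac1c[\cdots]+\tfrac{c-1}{c}\,\mathbb{E}_{X'\ge-x}[\cdots]$ shape, so the comparison can be carried out term by term inside the (shared) $\tfrac1c$ block, the $\tfrac{c-1}{c}$ expectation, and pointwise in $X'$.

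The base cases are immediate: for $c=0$ both sides vanish; for $h=0$ (and $c\ge1$) the left side equals $1$ while $\sum_{s=\alpha+1}^{\beta}1=\beta-\alpha\ge1$. For the induction step, fix $X'\ge-x$ and match \scout's integrand against $\sum_{s=\alpha+1}^{\beta}$ of \test's integrand using three alignments: (i) the single \test call inside \scout at threshold $-\alpha$ is exactly the $s=\alpha+1$ summand $I_{\text{\test}}^{\,\cdot\,,-s+1}(h-1,b)$; (ii) the conditional recursive full-\scout call on the window $[-\beta,-\alpha-1]$ is bounded, by the induction hypothesis at $(h-1,b)$, by $\sum_{s'=-\beta+1}^{-\alpha-1}I_{\text{\test}}^{\,\cdot\,,s'}(h-1,b)=\sum_{s=\alpha+2}^{\beta}I_{\text{\test}}^{\,\cdot\,,-s+1}(h-1,b)$, i.e.\ precisely the remaining summands of that same block (drop the indicator by nonnegativity when it is $0$); and (iii) the continuation $\mathds{1}_{\{-X'<\beta\}}I_{\text{\scout}}^{x,\max(\alpha,-X'),\beta}(h,c-1)$ is bounded, by the induction hypothesis at $(h,c-1)$, by $\sum_{s=\max(\alpha,-X')+1}^{\beta}I_{\text{\test}}^{x,s}(h,c-1)$, which coincides with $\sum_{s=\alpha+1}^{\beta}\mathds{1}_{\{-X'<s\}}I_{\text{\test}}^{x,s}(h,c-1)$ since $\mathds{1}_{\{-X'<s\}}=\mathds{1}_{\{s\ge-X'+1\}}$. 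The same three-way matching handles the special-child block (with $x$ in the role of $-X'$, and the $J$-statement used for the continuation) and the $J_{\text{\scout}}$ recursion (which has the identical shape without the $\tfrac1c$ term). Degenerate cases — $\max(\alpha,\cdot)\ge\beta$, or a recursion window leaving $[-n,n]$ — are harmless, because in each of them both the \scout quantity and the matching \test-sum are $0$ by the conventions above.

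The theorem then follows by specializing the windowed inequality to $\alpha=-n$, $\beta=n$, $c=b$, which gives $I_{\text{\scout}}^{x,-n,n}(h)\le\sum_{s=-n+1}^{n}I_{\text{\test}}^{x,s}(h)$ — the first claim. Averaging over the root value $X'\sim\mu$ and bounding $\sum_{s}I_{\text{\test}}^{s}(h)\le 2n\max_{s}I_{\text{\test}}^{s}(h)$, then applying $(\cdot)^{1/h}$ and letting $h\to\infty$, yields $r_{\text{\scout}}\le\max_{s}r_{\text{\test}}(s)=r_{\text{\test}}$ (the factor $(2n)^{1/h}\to1$, and a finite sum/max of exponential sequences is governed by its dominant term). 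The main obstacle is entirely the threshold bookkeeping in the induction step: making the index shift $s\mapsto-s+1$ and the $\max(\alpha,\cdot)$ truncations line up summand by summand, and verifying that all indicator-induced edge cases collapse consistently on both sides; no deeper idea is needed — and, in contrast with Theorem~\ref{th:ab51}, there is no reverse inequality (\scout can inspect fewer nodes than a fixed-threshold \test), which is exactly why one obtains $r_{\text{\scout}}\le r_{\text{\test}}$ rather than equality.
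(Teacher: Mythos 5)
Your proposal is correct and follows essentially the same route as the paper: a double induction on $(h,c)$ proving the windowed bound $I_{\text{\scout}}^{x,\alpha,\beta}(h,c)\le\sum_{s=\alpha+1}^{\beta}I_{\text{\test}}^{x,s}(h,c)$ (and likewise for $J$) by matching, pointwise in $X'$, the embedded \test call to the $s=\alpha+1$ summand, the recursive \scout call on $[-\beta,-\alpha-1]$ to the remaining summands via the index shift $s\mapsto -s+1$, and the continuation term via the induction hypothesis at $(h,c-1)$. Your explicit statement of the windowed strengthening as the inductive invariant, and of the $\alpha\ge\beta$ convention, is if anything slightly cleaner than the paper's write-up, which states the hypothesis only for the full window but implicitly uses the windowed version in its case analysis.
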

\begin{proof}
The proof is very similar to that of Proposition~\ref{prop:alphabeta}, in particular it relies on the same type of induction, so we only detail here the induction step for the function $J_\text{\scout}$.

We assume that
$$ I_{\text{\scout}}^{x,-n, n}(h) \leq \sum_{s=-n+1}^{n} I_{\text{\test}}^{x,s}(h)$$
and
$$ J_{\text{\scout}}^{x,-n, n}(h) \leq \sum_{s=-n+1}^{n} J_{\text{\test}}^{x,s}(h).$$

We recall the expression for $J_{\text{\scout}}$:
\begin{equation}
J_{\text{\scout}(h,c)}^{x,\alpha,\beta}=\mathbb E_{X'\geq-x}\Bigg[{I_{\text{\test}(h-1, b)}^{X', -\alpha}}+\mathds{1}_{\{\alpha<-X'\}}I^{X', -\beta,-\alpha-1}_{\text{\scout}(h-1, b)}+\mathds{1}_{\{-X'<\beta\}}J^{x, \max(\alpha, -X'),\beta}_{\text{\scout}(h,c-1)}\Bigg].
\end{equation}
A little subtlety that was not explicit in Equations~\ref{eq:i_scout}~and~\ref{eq:j_scout} is that for the special case $\alpha = \beta$, Algorithm~\ref{alg:scout} terminates instantly, incurring a cost of $0$. As a consequence when called with $\alpha =s-1$ and $\beta = s$, \scout is equivalent to a call to \textsc{test($s$)}, since $I^{X', -\beta,-\alpha-1}_{\text{\scout}(h-1, b)}=I^{X', -s,-s}_{\text{\scout}(h-1, b)}=0$.
Without loss of generality, we consider in the following $\alpha < \beta -1$, as the desired inequality is clearly true for $\alpha=\beta-1$.
Let's show the inequality holds term by term for every value of $X'$. We define:

$$A ={I_{\text{\test}(h-1, b)}^{X', -\alpha}}+\mathds{1}_{\{\alpha<-X'\}}I^{X', -\beta,-\alpha-1}_{\text{\scout}(h-1, b)}+\mathds{1}_{\{-X'<\beta\}}J^{x, \max(\alpha, -X'),\beta}_{\text{\scout}(h,c-1)} $$

and
\begin{align*}
\begin{split}
    B= \sum_{s=\alpha+1}^\beta{I_{\text{\test}(h-1, b)}^{X', -s+1}} + \sum_{s=\alpha+1}^\beta\mathds{1}_{\{-X'<s\}} {J_{\text{\test}(h, c-1)}^{x, s}}.
\end{split}
\end{align*}

Let's show that $A\leq B$ in all cases.
\paragraph{Case 1} If $\beta \leq-X'$, then $\mathds{1}_{\{\alpha<-X'\}}=1$ and $\mathds{1}_{\{-X'<\beta\}}=0$. So:
$$A ={I_{\text{\test}(h-1, b)}^{X', -\alpha}}+I^{X', -\beta,-\alpha-1}_{\text{\scout}(h-1, b)}$$
and

$$
    B= \sum_{s=\alpha+1}^\beta{I_{\text{\test}(h-1, b)}^{X', -s+1}} + \sum_{s=\alpha+1}^\beta\mathds{1}_{\{-X'<s\}} {J_{\text{\test}(h, c-1)}^{x, s}} \geq \sum_{s=\alpha+1}^\beta{I_{\text{\test}(h-1, b)}^{X', -s+1}}.
$$

By induction hypothesis on $(h-1,b)$ and variable change $s'=-s+1$, we can write:
$$A \leq {I_{\text{\test}(h-1, b)}^{X', -\alpha}}+\sum_{s=-\beta+1}^{-\alpha-1}{I_{\text{\test}(h-1, b)}^{X', s}} \leq {I_{\text{\test}(h-1, b)}^{X', -\alpha}}+\sum_{s=\alpha+2}^{\beta}{I_{\text{\test}(h-1, b)}^{X', -s+1}}\leq\sum_{s=\alpha+1}^\beta{I_{\text{\test}(h-1, b)}^{X', -s+1}} \leq B.$$
So the inequality holds in this case.

\paragraph{Case 2} If $\alpha <-X' <\beta$, then $\mathds{1}_{\{\alpha<-X'\}}=1$ and $\mathds{1}_{\{-X'<\beta\}}=1$ and $\max(\alpha,-X')=-X'$. $A$ and $B$ become:
$$A ={I_{\text{\test}(h-1, b)}^{X', -\alpha}}+I^{X', -\beta,-\alpha-1}_{\text{\scout}(h-1, b)}+J^{x, -X',\beta}_{\text{\scout}(h,c-1)} $$
and $$
    B= \sum_{s=\alpha+1}^\beta{I_{\text{\test}(h-1, b)}^{X', -s+1}} + \sum_{s=\alpha+1}^\beta\mathds{1}_{\{-X'<s\}} {J_{\text{\test}(h, c-1)}^{x, s}}.
$$\
For the term in $(h-1,b)$ it's the same as in Case 1. For the terms in $(h,c-1)$, we remark that
$$\sum_{s=\alpha+1}^\beta\mathds{1}_{\{-X'<s\}} {J_{\text{\test}(h, c-1)}^{x, s}} = \sum_{s=-X'+1}^\beta{J_{\text{\test}(h, c-1)}^{x, s}} $$
By induction hypothesis on $(h,c-1)$, with $\alpha=-X'$, we have:
$$J^{x, -X',\beta}_{\text{\scout}(h,c-1)} \leq \sum_{s=-X'+1}^\beta{J_{\text{\test}(h, c-1)}^{x, s}} .$$

So the inequality holds term by term for this case.

\paragraph{Case 3} If $-X' \leq \alpha$, we have then:
$$A ={I_{\text{\test}(h-1, b)}^{X', -\alpha}}+J^{x, -X',\beta}_{\text{\scout}(h,c-1)} $$
and $$
    B= \sum_{s=\alpha+1}^\beta{I_{\text{\test}(h-1, b)}^{X', -s+1}} + \sum_{s=\alpha+1}^\beta\mathds{1}_{\{-X'<s\}} {J_{\text{\test}(h, c-1)}^{x, s}}.
$$
By reusing arguments from the two previous cases, we can see easily that this case holds as well, thus concluding the proof.
\end{proof}

\newpage
\section{Monte-Carlo Simulations}

\label{app:montecarlo}
In this section we provide results of Monte-Carlo simulations, experimentally validating the equations characterizing the different algorithms in the paper. All experiments here, and in the main text, were run in a couple of hours of CPU time on a consumer-grade laptop.

In Figure~\ref{fig:mc_test},~\ref{fig:mc_ab} and \ref{fig:mc_scout}, we represent the evolution of the Monte-Carlo mean estimator of the \test, \scout and \alphabeta complexities, respectively. The Monte-Carlo estimator is represented as a function of the number of trials, for different settings of distribution $\mu$, branching degree $b$, value range $n$ and tree height $h$. In every scenario, the Monte-Carlo estimator converges to the oracle computed using equations derived in Section~\ref{sec:alphabeta_complexit}.  The settings were chosen to showcase a high diversity of parameters, while maintaining the computational cost reasonable. Results are averaged over 5 independent random seeds. Shaded areas represent bootstrapped 95\% confidence interval.

\begin{figure}[h]
\centering
\includegraphics[width=\textwidth]{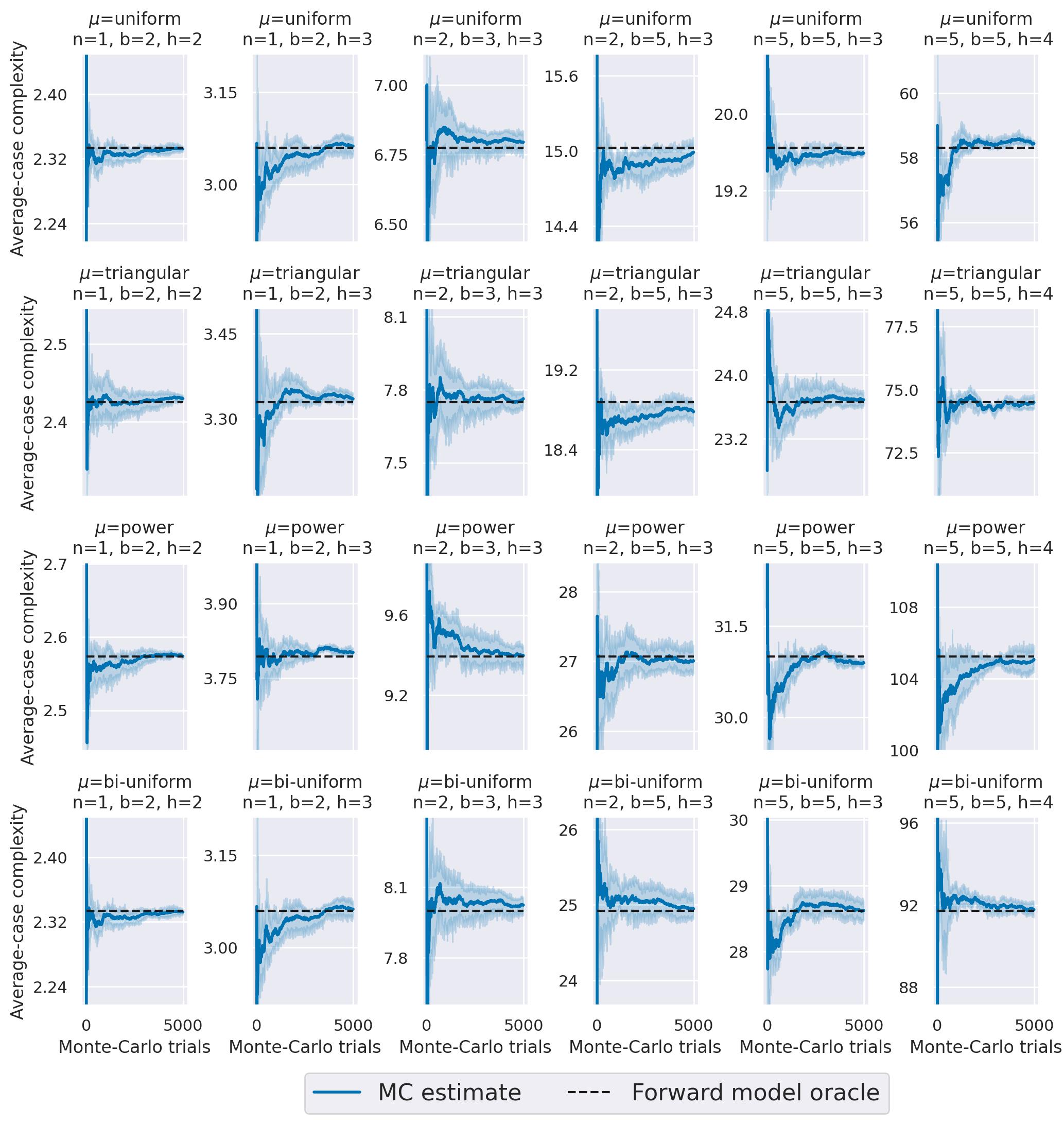}

\caption{Evolution of the Monte-Carlo mean estimator of the \test complexity, as a function of the number of trials, for different settings. Results are averaged over 5 independent random seeds and shaded areas represent bootstrapped 95\% confidence interval. The oracle is computed using Equations~\ref{eq:i_test_recurrence}~and~\ref{eq:j_test_recurrence}.}
\label{fig:mc_test}
\end{figure}

\begin{figure}[t]
\centering
\includegraphics[width=\textwidth]{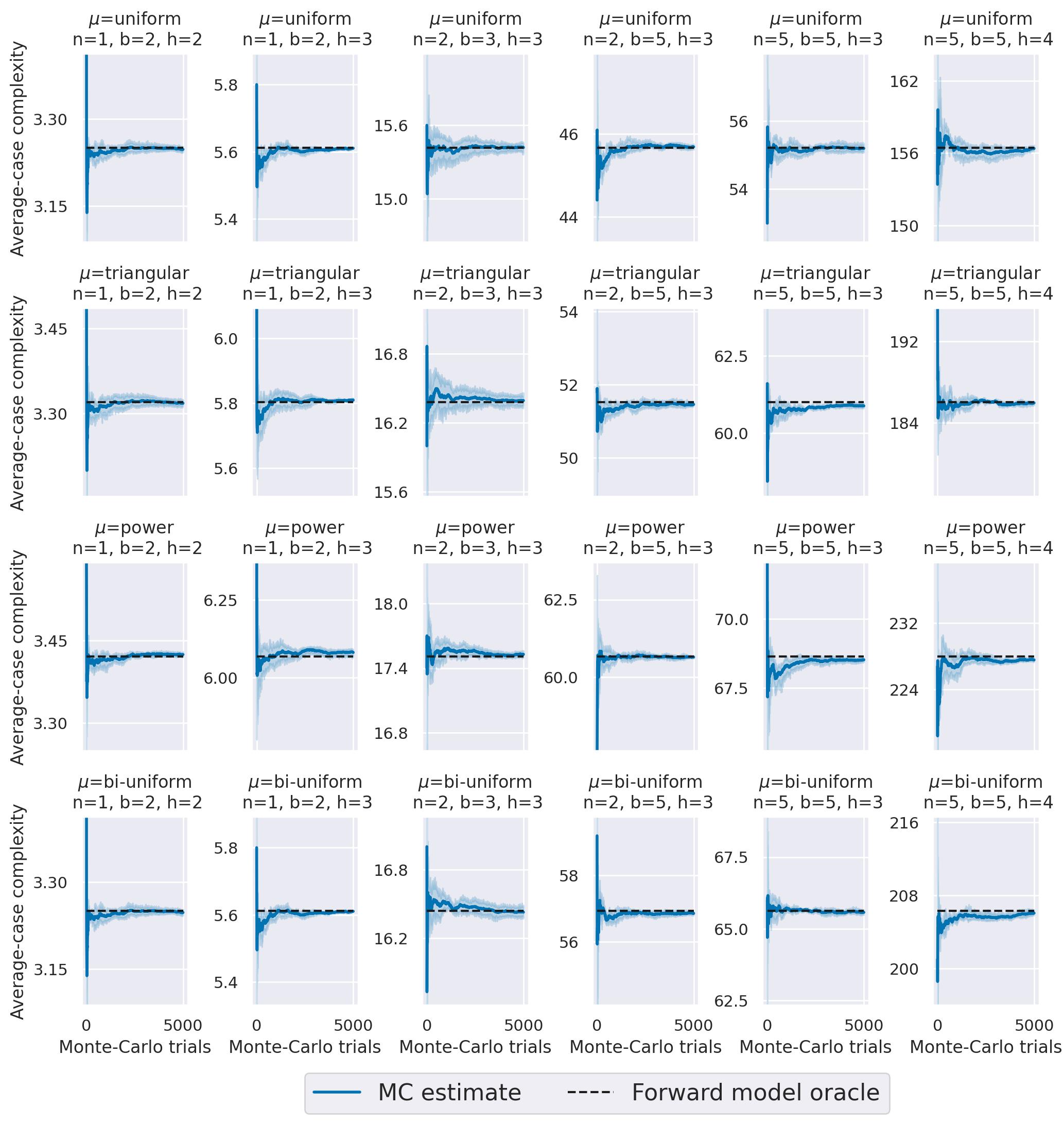}
\caption{Evolution of the Monte-Carlo mean estimator of the \alphabeta complexity, as a function of the number of trials, for different settings. Results are averaged over 5 independent random seeds and shaded areas represent bootstrapped 95\% confidence interval. The oracle is computed using Equations~\ref{equation:ab_complexity}~and~\ref{equation:ab_complexity_j}.}
\label{fig:mc_ab}
\end{figure}

\begin{figure}[t]
\centering
\includegraphics[width=\textwidth]{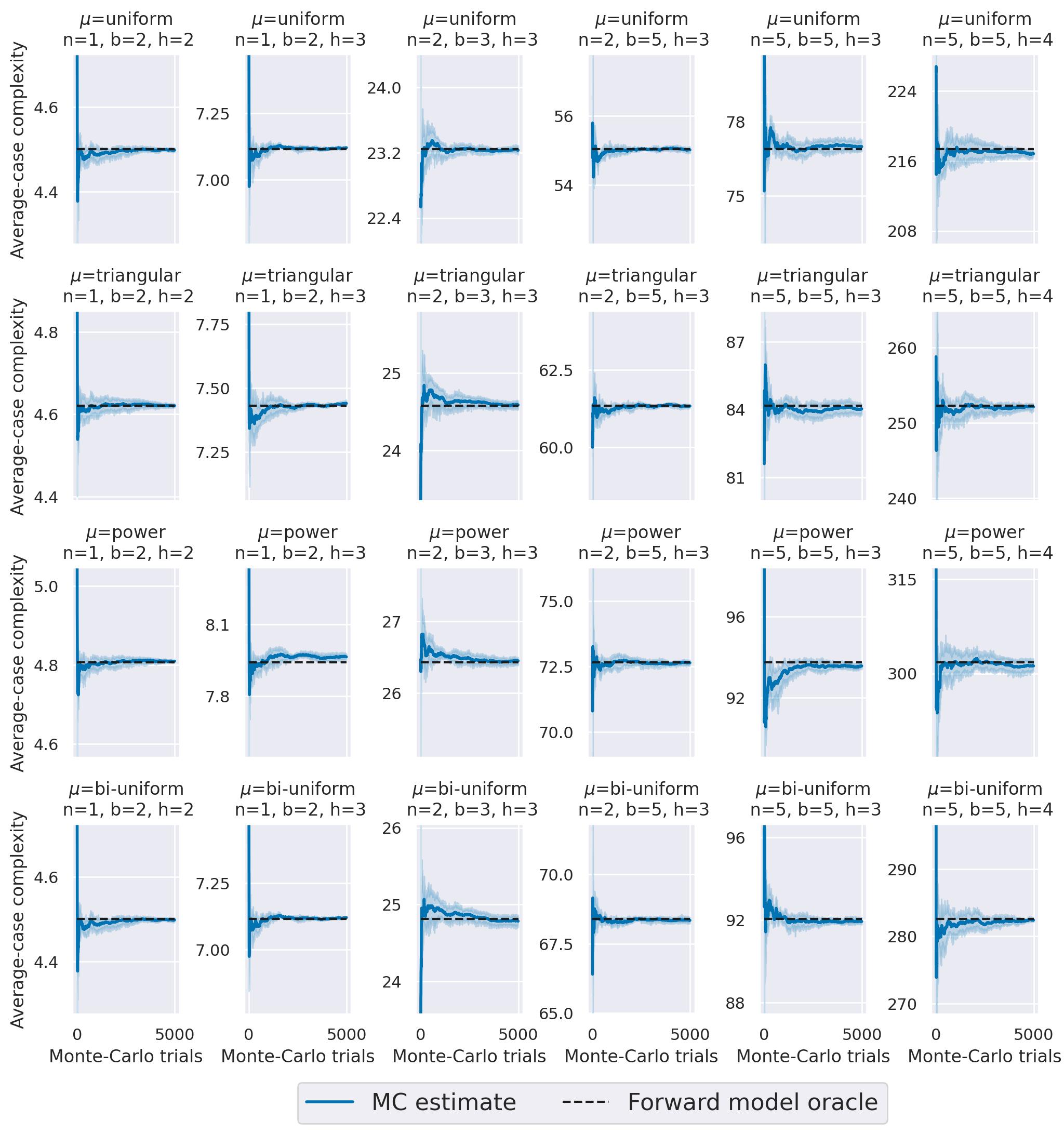}
\caption{Evolution of the Monte-Carlo mean estimator of the \scout complexity, as a function of the number of trials, for different settings. Results are averaged over 5 independent random seeds and shaded areas represent bootstrapped 95\% confidence interval. The oracle is computed using Equations~\ref{eq:i_scout}~and~\ref{eq:j_scout}.}
\label{fig:mc_scout}
\end{figure}

\clearpage
\section{Derivation details for the SOLVE analysis}
\label{app:solve}

\begin{algorithm}
\SetAlgoLined
\DontPrintSemicolon
\KwIn{Current node \textit{N}, search depth $h$}
\KwOut{Value of node \textit{N}.}
\lIf{$h = 0$}{\Return{N.value} }
$best \leftarrow 0$\;
\ForEach{N' in N.children}{
    $value \leftarrow 1-\text{SOLVE}(N',h-1)$
    
    $best \leftarrow \max(best, value )$ \\
\lIf{$best = 1$ }{\textbf{break}}}
\Return{$best$}\;
\caption{SOLVE($N, h$) --- binary-valued tree}\label{alg:solve}
\end{algorithm}

In the following, we provide additional details for the analysis of the \solve algorithm. 

We analyze the average-case complexity $I_{\text{\solve}}(h)$ of \solve on a depth-$h$ tree generated by  the \textit{forward model} where $\mu = \mathcal{B}(q)$ (Bernoulli distribution with $q$ the probability of drawing a $0$). Clearly, $I_{\text{\solve}}(h)= \mathbb{E}_{X \sim \mu}I_{\text{\solve}}^X(h)$ where $I_{\text{\solve}}^x(h)$ denotes the complexity of \solve, but conditioned on the evaluated node value $x \in \{0,1\}$. By capturing in equations the execution flow described in Algorithm~\ref{alg:solve} for every encountered case, we can characterize the dynamics of \solve. If $x=0$ all children values $x_i'$s are $1$. In this case \solve will recursively evaluate all $b$ children:
\begin{equation}
I_{\text{\solve}}^0(h) = b I_{\text{\solve}}^1(h - 1).
\label{eq:solve0}
\end{equation}
Now, if $x=1$, at least one child $x'_i$ will hold the value 0, and whenever \solve finds it, it will terminate early. Hence, \solve will incur the cost of evaluating this $x'_i=0$ child plus the expected number of failed trials needed to find it, multiplied by the cost of evaluating a $x'_i=1$ child:
\begin{equation}
I_{\text{\solve}}^1(h) = I_{\text{\solve}}^0(h - 1) + t(q,b) I_{\text{\solve}}^1(h - 1),
\label{eq:solve1}
\end{equation}
where $t(q,b)$ is the expected number of trials before finding a child with value 0. Note that we can write these equations because by design, the distribution $\mu$ is independent from the height $h$ (given the knowledge of the value $x$). Under the \textit{forward model}, $t$ can be derived by compounding the individual probabilities of finding a child with value $0$ on the first $(b-1)$ trials, leading to the expression in Equation~\ref{eq:solve_r_t}:
\begin{align*}
    t(q,b) = \sum_{k=1}^{b-1} \frac{1 + (b - k - 1)q}{b} k (1 - q)^k.
\end{align*}
Equations~\ref{eq:solve0}~and~\ref{eq:solve1} together define a recursive linear system, with initial conditions $I_{\text{\solve}}^x(0) = 1$ (a tree with only one node always incurs a cost of $1$). Luckily,  a closed-form solution for $I_{\text{\solve}}(h)$ can be derived, first we define:
\begin{align}
    r_{1,2} = \frac{ t(q,b) \pm \sqrt{t(q,b)^2 + 4b} }{2},
\end{align}
allowing us to write:
\begin{align}
    I_{\text{\solve}}(h) = q\bigg(Ar_1^h+(1-A)r_2^h\bigg)+b(1-q)\bigg(Ar_1^{h-1}+(1-A)r_2^{h-1}\bigg)
\end{align}
where $A\in[0,1]$ is defined as:
\begin{align*}
    A = \frac 1 2 + \frac {1+t(q,b)/2} {\sqrt{t(q,b)^2+4b}}.
\end{align*}

Clearly the branching factor is determined by the larger of $r_1$ and $r_2$, hence the expression in Equation~\ref{eq:solve_r_t}:
\begin{align*}
        r_{\text{\solve}} = r_1=  \frac{ t(q,b) + \sqrt{t(q,b)^2 + 4b} }{2}.
\end{align*}

\section*{NeurIPS Paper Checklist}

\begin{enumerate}

\item {\bf Claims}
    \item[] Question: Do the main claims made in the abstract and introduction accurately reflect the paper's contributions and scope?
    \item[] Answer: \answerYes{} 
    \item[] Justification:  The abstract and introduction accurately describe the paper's main contributions: the critique of the standard model, the proposal of the forward model, the derivation of recursive complexity equations for specific algorithms under this model, and the key finding regarding differing finite-depth performance despite similar asymptotic branching factors. The scope (analysis of classical deterministic game solvers) is also clearly stated.
   
    \item[] Guidelines:
    \begin{itemize}
        \item The answer NA means that the abstract and introduction do not include the claims made in the paper.
        \item The abstract and/or introduction should clearly state the claims made, including the contributions made in the paper and important assumptions and limitations. A No or NA answer to this question will not be perceived well by the reviewers. 
        \item The claims made should match theoretical and experimental results, and reflect how much the results can be expected to generalize to other settings. 
        \item It is fine to include aspirational goals as motivation as long as it is clear that these goals are not attained by the paper. 
    \end{itemize}

\item {\bf Limitations}
    \item[] Question: Does the paper discuss the limitations of the work performed by the authors?
    \item[] Answer: \answerYes{} 
    \item[] Justification: The discussion section explicitly discusses limitations, including the lack of closed-form expressions for branching factors for all algorithms, the focus on specific classical algorithms (AlphaBeta, Scout, Test) while leaving others like MTD(f) and PVS for future work, and the current focus on discrete-valued trees.
    \item[] Guidelines:
    \begin{itemize}
        \item The answer NA means that the paper has no limitation while the answer No means that the paper has limitations, but those are not discussed in the paper. 
        \item The authors are encouraged to create a separate "Limitations" section in their paper.
        \item The paper should point out any strong assumptions and how robust the results are to violations of these assumptions (e.g., independence assumptions, noiseless settings, model well-specification, asymptotic approximations only holding locally). The authors should reflect on how these assumptions might be violated in practice and what the implications would be.
        \item The authors should reflect on the scope of the claims made, e.g., if the approach was only tested on a few datasets or with a few runs. In general, empirical results often depend on implicit assumptions, which should be articulated.
        \item The authors should reflect on the factors that influence the performance of the approach. For example, a facial recognition algorithm may perform poorly when image resolution is low or images are taken in low lighting. Or a speech-to-text system might not be used reliably to provide closed captions for online lectures because it fails to handle technical jargon.
        \item The authors should discuss the computational efficiency of the proposed algorithms and how they scale with dataset size.
        \item If applicable, the authors should discuss possible limitations of their approach to address problems of privacy and fairness.
        \item While the authors might fear that complete honesty about limitations might be used by reviewers as grounds for rejection, a worse outcome might be that reviewers discover limitations that aren't acknowledged in the paper. The authors should use their best judgment and recognize that individual actions in favor of transparency play an important role in developing norms that preserve the integrity of the community. Reviewers will be specifically instructed to not penalize honesty concerning limitations.
    \end{itemize}

\item {\bf Theory assumptions and proofs}
    \item[] Question: For each theoretical result, does the paper provide the full set of assumptions and a complete (and correct) proof?
    \item[] Answer: \answerYes{}
    \item[] Justification: The paper states its main theoretical results  and indicates that proofs are provided in the appendix. The core derivations of complexity recurrences proceed from the definition of the forward model and the algorithms. Key assumptions about the model (uniform special child selection, fixed distribution $\mu$) are stated. Theorems and formulas are numbered and referenced.
    \item[] Guidelines:
    \begin{itemize}
        \item The answer NA means that the paper does not include theoretical results. 
        \item All the theorems, formulas, and proofs in the paper should be numbered and cross-referenced.
        \item All assumptions should be clearly stated or referenced in the statement of any theorems.
        \item The proofs can either appear in the main paper or the supplemental material, but if they appear in the supplemental material, the authors are encouraged to provide a short proof sketch to provide intuition. 
        \item Inversely, any informal proof provided in the core of the paper should be complemented by formal proofs provided in appendix or supplemental material.
        \item Theorems and Lemmas that the proof relies upon should be properly referenced. 
    \end{itemize}

    \item {\bf Experimental result reproducibility}
    \item[] Question: Does the paper fully disclose all the information needed to reproduce the main experimental results of the paper to the extent that it affects the main claims and/or conclusions of the paper (regardless of whether the code and data are provided or not)?
    \item[] Answer: \answerYes{} 
    \item[] Justification: The main "experimental" results are numerical computations based on the derived theoretical recurrence relations. The paper provides these equations, describes the method for solving them (matrix iteration, spectral radius), specifies the parameters used, and describes/plots the distributions $\mu$ used in the figures. Additionally, code is included in the supplementary material.
  
    \item[] Guidelines:
    \begin{itemize}
        \item The answer NA means that the paper does not include experiments.
        \item If the paper includes experiments, a No answer to this question will not be perceived well by the reviewers: Making the paper reproducible is important, regardless of whether the code and data are provided or not.
        \item If the contribution is a dataset and/or model, the authors should describe the steps taken to make their results reproducible or verifiable. 
        \item Depending on the contribution, reproducibility can be accomplished in various ways. For example, if the contribution is a novel architecture, describing the architecture fully might suffice, or if the contribution is a specific model and empirical evaluation, it may be necessary to either make it possible for others to replicate the model with the same dataset, or provide access to the model. In general. releasing code and data is often one good way to accomplish this, but reproducibility can also be provided via detailed instructions for how to replicate the results, access to a hosted model (e.g., in the case of a large language model), releasing of a model checkpoint, or other means that are appropriate to the research performed.
        \item While NeurIPS does not require releasing code, the conference does require all submissions to provide some reasonable avenue for reproducibility, which may depend on the nature of the contribution. For example
        \begin{enumerate}
            \item If the contribution is primarily a new algorithm, the paper should make it clear how to reproduce that algorithm.
            \item If the contribution is primarily a new model architecture, the paper should describe the architecture clearly and fully.
            \item If the contribution is a new model (e.g., a large language model), then there should either be a way to access this model for reproducing the results or a way to reproduce the model (e.g., with an open-source dataset or instructions for how to construct the dataset).
            \item We recognize that reproducibility may be tricky in some cases, in which case authors are welcome to describe the particular way they provide for reproducibility. In the case of closed-source models, it may be that access to the model is limited in some way (e.g., to registered users), but it should be possible for other researchers to have some path to reproducing or verifying the results.
        \end{enumerate}
    \end{itemize}

\item {\bf Open access to data and code}
    \item[] Question: Does the paper provide open access to the data and code, with sufficient instructions to faithfully reproduce the main experimental results, as described in supplemental material?
    \item[] Answer: \answerYes{} 
    \item[] Justification: Python code included in the supplementary material. No external data is used.

    \item[] Guidelines:
    \begin{itemize}
        \item The answer NA means that paper does not include experiments requiring code.
        \item Please see the NeurIPS code and data submission guidelines (\url{https://nips.cc/public/guides/CodeSubmissionPolicy}) for more details.
        \item While we encourage the release of code and data, we understand that this might not be possible, so “No” is an acceptable answer. Papers cannot be rejected simply for not including code, unless this is central to the contribution (e.g., for a new open-source benchmark).
        \item The instructions should contain the exact command and environment needed to run to reproduce the results. See the NeurIPS code and data submission guidelines (\url{https://nips.cc/public/guides/CodeSubmissionPolicy}) for more details.
        \item The authors should provide instructions on data access and preparation, including how to access the raw data, preprocessed data, intermediate data, and generated data, etc.
        \item The authors should provide scripts to reproduce all experimental results for the new proposed method and baselines. If only a subset of experiments are reproducible, they should state which ones are omitted from the script and why.
        \item At submission time, to preserve anonymity, the authors should release anonymized versions (if applicable).
        \item Providing as much information as possible in supplemental material (appended to the paper) is recommended, but including URLs to data and code is permitted.
    \end{itemize}

\item {\bf Experimental setting/details}
    \item[] Question: Does the paper specify all the training and test details (e.g., data splits, hyperparameters, how they were chosen, type of optimizer, etc.) necessary to understand the results?
    \item[] Answer: \answerYes{} 
    \item[] Justification: All parameters and settings are described.
    \item[] Guidelines:
    \begin{itemize}
        \item The answer NA means that the paper does not include experiments.
        \item The experimental setting should be presented in the core of the paper to a level of detail that is necessary to appreciate the results and make sense of them.
        \item The full details can be provided either with the code, in appendix, or as supplemental material.
    \end{itemize}

\item {\bf Experiment statistical significance}
    \item[] Question: Does the paper report error bars suitably and correctly defined or other appropriate information about the statistical significance of the experiments?
    \item[] Answer: \answerYes{} 
    \item[] Justification: The main results in Figure~\ref{fig:results} are exact calculations of expected values from the theoretical model, not empirical results subject to sampling variance. Therefore, statistical error bars are not applicable to these central findings. For the Monte-Carlo simulations in the Appendix~\ref{app:montecarlo}, number of seeds and confidence intervals are included.
   
    \item[] Guidelines:
    \begin{itemize}
        \item The answer NA means that the paper does not include experiments.
        \item The authors should answer "Yes" if the results are accompanied by error bars, confidence intervals, or statistical significance tests, at least for the experiments that support the main claims of the paper.
        \item The factors of variability that the error bars are capturing should be clearly stated (for example, train/test split, initialization, random drawing of some parameter, or overall run with given experimental conditions).
        \item The method for calculating the error bars should be explained (closed form formula, call to a library function, bootstrap, etc.)
        \item The assumptions made should be given (e.g., Normally distributed errors).
        \item It should be clear whether the error bar is the standard deviation or the standard error of the mean.
        \item It is OK to report 1-sigma error bars, but one should state it. The authors should preferably report a 2-sigma error bar than state that they have a 96\% CI, if the hypothesis of Normality of errors is not verified.
        \item For asymmetric distributions, the authors should be careful not to show in tables or figures symmetric error bars that would yield results that are out of range (e.g., negative error rates).
        \item If error bars are reported in tables or plots, The authors should explain in the text how they were calculated and reference the corresponding figures or tables in the text.
    \end{itemize}

\item {\bf Experiments compute resources}
    \item[] Question: For each experiment, does the paper provide sufficient information on the computer resources (type of compute workers, memory, time of execution) needed to reproduce the experiments?
    \item[] Answer: \answerYes{} 
    \item[] Justification: Compute resources are provided in Appendix~\ref{app:montecarlo}.
    \item[] Guidelines:
    \begin{itemize}
        \item The answer NA means that the paper does not include experiments.
        \item The paper should indicate the type of compute workers CPU or GPU, internal cluster, or cloud provider, including relevant memory and storage.
        \item The paper should provide the amount of compute required for each of the individual experimental runs as well as estimate the total compute. 
        \item The paper should disclose whether the full research project required more compute than the experiments reported in the paper (e.g., preliminary or failed experiments that didn't make it into the paper). 
    \end{itemize}
    
\item {\bf Code of ethics}
    \item[] Question: Does the research conducted in the paper conform, in every respect, with the NeurIPS Code of Ethics \url{https://neurips.cc/public/EthicsGuidelines}?
    \item[] Answer: \answerYes{}{} 
    \item[] Justification: The research involves theoretical modeling and algorithmic analysis. It does not involve human subjects, sensitive data, or direct applications with obvious ethical concerns related to fairness, privacy, or potential misuse as outlined in the Code of Ethics.
   
    \item[] Guidelines:
    \begin{itemize}
        \item The answer NA means that the authors have not reviewed the NeurIPS Code of Ethics.
        \item If the authors answer No, they should explain the special circumstances that require a deviation from the Code of Ethics.
        \item The authors should make sure to preserve anonymity (e.g., if there is a special consideration due to laws or regulations in their jurisdiction).
    \end{itemize}

\item {\bf Broader impacts}
    \item[] Question: Does the paper discuss both potential positive societal impacts and negative societal impacts of the work performed?
    \item[] Answer: \answerNA{}
    \item[] Justification: The paper focuses on foundational theoretical analysis of algorithms and does not include a discussion of potential broader societal impacts. The work is not directly tied to specific applications where such impacts would be immediate.
  
    \item[] Guidelines:
    \begin{itemize}
        \item The answer NA means that there is no societal impact of the work performed.
        \item If the authors answer NA or No, they should explain why their work has no societal impact or why the paper does not address societal impact.
        \item Examples of negative societal impacts include potential malicious or unintended uses (e.g., disinformation, generating fake profiles, surveillance), fairness considerations (e.g., deployment of technologies that could make decisions that unfairly impact specific groups), privacy considerations, and security considerations.
        \item The conference expects that many papers will be foundational research and not tied to particular applications, let alone deployments. However, if there is a direct path to any negative applications, the authors should point it out. For example, it is legitimate to point out that an improvement in the quality of generative models could be used to generate deepfakes for disinformation. On the other hand, it is not needed to point out that a generic algorithm for optimizing neural networks could enable people to train models that generate Deepfakes faster.
        \item The authors should consider possible harms that could arise when the technology is being used as intended and functioning correctly, harms that could arise when the technology is being used as intended but gives incorrect results, and harms following from (intentional or unintentional) misuse of the technology.
        \item If there are negative societal impacts, the authors could also discuss possible mitigation strategies (e.g., gated release of models, providing defenses in addition to attacks, mechanisms for monitoring misuse, mechanisms to monitor how a system learns from feedback over time, improving the efficiency and accessibility of ML).
    \end{itemize}
    
\item {\bf Safeguards}
    \item[] Question: Does the paper describe safeguards that have been put in place for responsible release of data or models that have a high risk for misuse (e.g., pretrained language models, image generators, or scraped datasets)?
    \item[] Answer: \answerNA{} 
    \item[] Justification: The paper does not release data or models that pose a high risk for misuse. The released asset is code for theoretical analysis.
 
    \item[] Guidelines:
    \begin{itemize}
        \item The answer NA means that the paper poses no such risks.
        \item Released models that have a high risk for misuse or dual-use should be released with necessary safeguards to allow for controlled use of the model, for example by requiring that users adhere to usage guidelines or restrictions to access the model or implementing safety filters. 
        \item Datasets that have been scraped from the Internet could pose safety risks. The authors should describe how they avoided releasing unsafe images.
        \item We recognize that providing effective safeguards is challenging, and many papers do not require this, but we encourage authors to take this into account and make a best faith effort.
    \end{itemize}

\item {\bf Licenses for existing assets}
    \item[] Question: Are the creators or original owners of assets (e.g., code, data, models), used in the paper, properly credited and are the license and terms of use explicitly mentioned and properly respected?
    \item[] Answer: \answerNA{} 
    \item[] Justification: The paper does not use existing assets.
    \item[] Guidelines:
    \begin{itemize}
        \item The answer NA means that the paper does not use existing assets.
        \item The authors should cite the original paper that produced the code package or dataset.
        \item The authors should state which version of the asset is used and, if possible, include a URL.
        \item The name of the license (e.g., CC-BY 4.0) should be included for each asset.
        \item For scraped data from a particular source (e.g., website), the copyright and terms of service of that source should be provided.
        \item If assets are released, the license, copyright information, and terms of use in the package should be provided. For popular datasets, \url{paperswithcode.com/datasets} has curated licenses for some datasets. Their licensing guide can help determine the license of a dataset.
        \item For existing datasets that are re-packaged, both the original license and the license of the derived asset (if it has changed) should be provided.
        \item If this information is not available online, the authors are encouraged to reach out to the asset's creators.
    \end{itemize}

\item {\bf New assets}
    \item[] Question: Are new assets introduced in the paper well documented and is the documentation provided alongside the assets?
    \item[] Answer: \answerYes{} 
    \item[] Justification: The paper introduces new code for complexity calculations, included in supplementary material and released upon acceptance. The code is concise and documented.
  
    \item[] Guidelines:
    \begin{itemize}
        \item The answer NA means that the paper does not release new assets.
        \item Researchers should communicate the details of the dataset/code/model as part of their submissions via structured templates. This includes details about training, license, limitations, etc. 
        \item The paper should discuss whether and how consent was obtained from people whose asset is used.
        \item At submission time, remember to anonymize your assets (if applicable). You can either create an anonymized URL or include an anonymized zip file.
    \end{itemize}

\item {\bf Crowdsourcing and research with human subjects}
    \item[] Question: For crowdsourcing experiments and research with human subjects, does the paper include the full text of instructions given to participants and screenshots, if applicable, as well as details about compensation (if any)? 
    \item[] Answer: \answerNA{} 
    \item[] Justification: The research does not involve crowdsourcing or human subjects.
   
    \item[] Guidelines:
    \begin{itemize}
        \item The answer NA means that the paper does not involve crowdsourcing nor research with human subjects.
        \item Including this information in the supplemental material is fine, but if the main contribution of the paper involves human subjects, then as much detail as possible should be included in the main paper. 
        \item According to the NeurIPS Code of Ethics, workers involved in data collection, curation, or other labor should be paid at least the minimum wage in the country of the data collector. 
    \end{itemize}

\item {\bf Institutional review board (IRB) approvals or equivalent for research with human subjects}
    \item[] Question: Does the paper describe potential risks incurred by study participants, whether such risks were disclosed to the subjects, and whether Institutional Review Board (IRB) approvals (or an equivalent approval/review based on the requirements of your country or institution) were obtained?
    \item[] Answer: \answerNA{} 
    \item[] Justification: The research does not involve human subjects.
   
    \item[] Guidelines:
    \begin{itemize}
        \item The answer NA means that the paper does not involve crowdsourcing nor research with human subjects.
        \item Depending on the country in which research is conducted, IRB approval (or equivalent) may be required for any human subjects research. If you obtained IRB approval, you should clearly state this in the paper. 
        \item We recognize that the procedures for this may vary significantly between institutions and locations, and we expect authors to adhere to the NeurIPS Code of Ethics and the guidelines for their institution. 
        \item For initial submissions, do not include any information that would break anonymity (if applicable), such as the institution conducting the review.
    \end{itemize}

\item {\bf Declaration of LLM usage}
    \item[] Question: Does the paper describe the usage of LLMs if it is an important, original, or non-standard component of the core methods in this research? Note that if the LLM is used only for writing, editing, or formatting purposes and does not impact the core methodology, scientific rigorousness, or originality of the research, declaration is not required.
    \item[] Answer:\answerNA{}
    \item[] Justification: LLMs were only used for proofreading and editing purposes.
    \item[] Guidelines:
    \begin{itemize}
        \item The answer NA means that the core method development in this research does not involve LLMs as any important, original, or non-standard components.
        \item Please refer to our LLM policy (\url{https://neurips.cc/Conferences/2025/LLM}) for what should or should not be described.
    \end{itemize}

\end{enumerate}

\end{document}